%% file: paper.tex
\documentclass{article}

\newif\ifnumericcite
\numericcitefalse

\usepackage[a4paper,margin=1in]{geometry}
\usepackage[T1]{fontenc}
\usepackage{lmodern}
\usepackage{microtype}
\usepackage{authblk}

\usepackage{url}
\usepackage{amsfonts}
\usepackage{amsmath}
\usepackage{amsthm}
\usepackage{thmtools}

\declaretheorem[
  name=Lemma,
  within=section
]{lemma}

\makeatletter
\newcommand{\mainorrestated}[2]{\ifthmt@thisistheone#1\else#2\fi}
\makeatother

\ifnumericcite
  \usepackage[numbers]{natbib}
\else
  \usepackage[round]{natbib}
\fi

\usepackage[dvipsnames]{xcolor}
\usepackage{amssymb}
\usepackage{ifthen}
\usepackage{multirow}
\usepackage{booktabs}
\usepackage{graphicx}
\usepackage{etoolbox}
\usepackage{tikz}
\usetikzlibrary{positioning,decorations.pathreplacing}
\usepackage[colorlinks=true,linkcolor=blue!60!black,citecolor=blue!60!black,urlcolor=blue!60!black]{hyperref}

\graphicspath{{../}}

\newcommand{\mainbodyfigfraction}{0.65}
\newif\ifmainbodyfloats
\mainbodyfloatstrue
\makeatletter
\g@addto@macro\@floatboxreset{%
  \ifmainbodyfloats
    \setlength{\columnwidth}{\mainbodyfigfraction\textwidth}%
    \centering
  \fi}
\let\generic@resizebox\resizebox
\newsavebox{\generic@fitbox}
\newcommand{\generic@shrinktofit}[3]{%
  \sbox{\generic@fitbox}{#3}%
  \ifdim\wd\generic@fitbox>\hsize
    \generic@resizebox{\hsize}{!}{\usebox{\generic@fitbox}}%
  \else
    \usebox{\generic@fitbox}%
  \fi}
\AtBeginEnvironment{table}{\let\resizebox\generic@shrinktofit}
\makeatother

\newboolean{showcomments}
\setboolean{showcomments}{false}
\ifthenelse{\boolean{showcomments}}
{\newcommand{\annote}[3]{{\color{#3}%
{#2}:	\color{#3}
{\em #1}}
}}{\newcommand{\annote}[3]{#1}}

\newcommand{\ph}[1]{\annote{#1}{PH}{orange}}
\newcommand{\nc}[1]{\annote{#1}{NC}{ForestGreen}}
\newcommand{\llm}[1]{\annote{#1}{LLM review}{blue}}

\title{Fast Gradient-Based Approximation of\\
Continuous Dynamic Time Warping}

\author[1]{Pierre Houedry}
\author[2]{Romain Tavenard}
\author[3]{Nicolas Courty}
\author[2]{Romaric Gaudel}
\affil[1]{INRIA Rennes}
\affil[2]{IRISA, UMR 6074, CNRS, Université de Rennes}
\affil[3]{IRISA, UMR 6074, CNRS, Université de Bretagne Sud}
\date{}

\begin{document}

\maketitle

\begin{abstract}
Dynamic Time Warping (DTW) is a common choice for measuring the discrepancy between time series. It proceeds by finding an optimal warping between timestamps through dynamic programming, which can be slow and is not differentiable. Consequently, several variants have been proposed. Among them, Soft-DTW proposes a smooth formulation of DTW, amenable to gradient computation, while Continuous-DTW (CDTW) introduces a continuous formulation through parametrizable warping functions. Yet, computational costs remain high for both methods and impractical for large datasets in modern machine learning pipelines. This paper revisits CDTW in several ways. After establishing its differentiability under mild conditions, we introduce FastCDTW, which is an efficient approximate solver with two different computational strategies: an exact closed-form scheme for piecewise-linear warping functions, and an unbiased Monte Carlo estimator for smooth ones, such as implicit neural networks. Both strategies provide gradients with respect to the warping function and the input time series, and the resulting algorithm is GPU-friendly. We demonstrate its benefits through its integration into more complex learning problems, including time series averaging, classification and deep domain adaptation.
\end{abstract}

\section{Introduction}\label{sec:introduction}
\input{sections/introduction}

\section{Background}\label{sec:background}
\input{sections/background}

\section{Properties of CDTW}\label{sec:properties}
\input{sections/properties}

\section{FastCDTW}\label{sec:method}
\input{sections/method}

\section{Numerical Experiments}\label{sec:numerical_experiments}
In the tasks covered in this section, we use FastCDTW in place of DTW or Soft-DTW in a machine learning pipeline and expect comparable accuracy, with the PL and MC schemes close to each other. Its main advantage should be speed on a GPU, most prominent when a single warping is shared across samples and learned with the model (Section~\ref{sec:complexity}).
For every experiment, the hyperparameters, hardware and remaining implementation details are given in Appendix~\ref{app:experiments}.

Before diving into the performance of FastCDTW in downstream tasks, we first illustrate the impact of series length in the computation time of DTW, Soft-DTW and FastCDTW in Figure~\ref{fig:timings}.
In the large series length regime, FastCDTW is significantly faster than DTW and Soft-DTW, with a speedup of up to 10x over Soft-DTW for series of length 10,000.
Moreover, the GPU implementation unlocks the practical use of DTW-like approaches for longer time series.

\begin{figure}[t]
    \centering
    \includegraphics[width=.8\columnwidth]{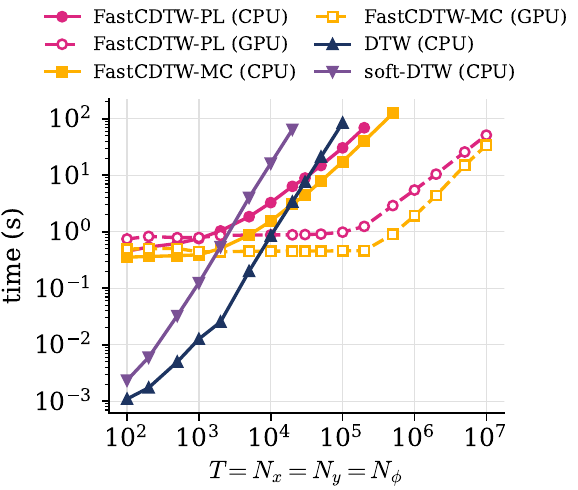}
    \caption{Computation time of DTW, Soft-DTW and FastCDTW as a function of the series length.}
    \label{fig:timings}
\end{figure}

\subsection{Time series averaging}\label{subsec:averaging}
\input{sections/experiments/averaging}

\subsection{Time series classification}\label{subsec:classification}
\input{sections/experiments/nn}

\subsection{Domain adaptation}\label{subsec:mad}
\input{sections/experiments/mad}

\section{Conclusion}\label{sec:conclusion}
\input{sections/conclusion}

\section*{Acknowledgements}
\input{sections/acknowledgements}

\ifnumericcite
  \bibliographystyle{plainnat}
\else
  \bibliographystyle{apalike}
\fi
\bibliography{biblio}

\clearpage
\mainbodyfloatsfalse
\appendix

\section{Proofs}\label{app:proofs}
\input{sections/appendix/missingproofs}

\section{Time complexity of the backward pass}\label{app:backward_cost}
\input{sections/appendix/complexity}

\section{Numerical experiments}\label{app:experiments}
\input{sections/appendix/averaging}

\input{sections/appendix/nn}

\input{sections/appendix/domain_adaptation}

\end{document}

%% file: sections/introduction.tex
Measuring similarity between time series is a core building block of many machine learning pipelines, including nearest-neighbor classification~\citep{bagnall2017bakeoff}, time series averaging and clustering~\citep{dba,softdtw}, and, more recently, domain adaptation~\citep{mad} and representation learning for temporal data~\citep{lods2017ldps,rhodes2024timewarpvaesimultaneoustimewarpingrepresentation}.
Because real-world signals rarely share a common clock (sensors sample at different rates~\citep{rubanova2019latentode}, the same phenomenon can unfold faster or slower across instances, sequences start and end at slightly different times), a similarity measure between time series must account for such temporal misalignments rather than compare observations index by index, as the Euclidean distance would do.
Dynamic Time Warping (DTW)~\citep{Vintsyuk1968,SakoeChiba1978} has become the default answer to this need: it aligns two sequences by searching, over the combinatorially many monotonic correspondences between their samples, for the one that minimizes a cumulative discrepancy.
This combinatorial construction is also the source of DTW's two main limitations.
First, DTW reasons about discrete index correspondences rather than about the underlying continuous phenomenon the sequences sample from, which makes it sensitive to the sampling rate and treats differently-sampled realizations of the same signal as fundamentally different objects.
Second, the $\arg\min$ over warping paths that defines DTW is a non-differentiable, piecewise-constant function of its inputs, which rules out the use of DTW as a training loss or as a component of a pipeline optimized by gradient descent.

These two limitations have each spawned their own line of remedies, but, as we argue below, neither has been resolved without giving up on the other.
On the differentiability side, \citet{softdtw} introduced Soft-DTW, which replaces the minimum over alignments with a soft-minimum controlled by a temperature parameter.
This relaxation is genuinely differentiable and has enabled applications such as barycenter computation, clustering, and end-to-end learning with temporal alignment.
The relaxation, however, is not without cost: smoothing the minimum over an exponential number of paths tends to \emph{average} alignments rather than \emph{select} among them, which in practice manifests as a systematic denoising effect~\citep{softdtwdiv}.

A separate line of work reformulates time warping directly in continuous time.
Rather than an index correspondence, a warping is modeled as a pair of continuous, monotonic reparametrizations of the two signals, and the cost of the alignment is obtained by \emph{integrating}, rather than summing, a local discrepancy along the corresponding curves.
This continuous counterpart to DTW, referred to as Continuous DTW (CDTW)~\citep{buchin2025cdtw}, has been argued to avoid artifacts that plague DTW.
Turning this appealing formulation into a usable algorithm, however, has proven difficult: existing methods either operate at a computational cost that remains impractical at the sequence lengths and iteration budgets required by machine learning pipelines, and without differentiability.
Section~\ref{sec:background} reviews these methods and their complexity in detail.

Soft-DTW and CDTW thus address complementary symptoms of the same underlying issue (respectively, the non-differentiability of DTW and its discreteness), but neither closes both gaps at once, and the continuous formulations that do exist remain, by their own authors' admission, computationally out of reach for practical use.
What is missing is a similarity measure that is simultaneously (i) genuinely continuous, to avoid the sampling-rate artifacts of discrete alignments; (ii) differentiable, so that it can serve as a training loss or as a component of a larger learned pipeline and (iii) tractable at the sequence lengths and iteration counts encountered in practice.

We introduce \emph{FastCDTW} which proceeds by solving the CDTW problem by gradient descent over a finite-dimensional parametrization of the warping functions rather than solving this problem to global optimality via dynamic programming. %
This choice trades the global-optimality guarantee enjoyed by exact CDTW solvers for tractability: computing FastCDTW costs orders of magnitude less than exact or approximate CDTW solvers, and it is naturally suited to batched GPU computation.
The price to pay is that gradient descent on the resulting non-convex objective is not guaranteed to converge to the global minimum.
We show throughout Section~\ref{sec:method} and in our experiments that, despite this lack of global guarantees, the resulting approximation is accurate enough, and considerably faster, to be useful in practice.

We make the following contributions:
\begin{itemize}
    \item \textbf{Theoretical properties of CDTW} (Section~\ref{sec:properties}): we reduce the CDTW optimization to a single warping, show that Lipschitz warpings reach the same optimal cost, and prove the differentiability of CDTW with respect to an input curve when the warping is restricted to a parametrized family. To our knowledge, the last two results are new;
    \item \textbf{FastCDTW}, a gradient-based method to compute CDTW (Section~\ref{sec:method}): the warping is optimized by gradient descent over a finite-dimensional parametrization, and the cost is evaluated either in closed form for piecewise-linear warpings or by an unbiased Monte Carlo estimator for smooth ones. The method is GPU-friendly;
    \item \textbf{An empirical validation} on averaging, classification and domain adaptation tasks, showing that FastCDTW can be used as a replacement for DTW or Soft-DTW at a lower computational cost: it notably avoids the peak-erosion pathology of Soft-DTW barycenters, and runs up to $50\times$ faster than DTW when plugged into a domain adaptation pipeline. %

\end{itemize}

%% file: sections/background.tex
All the alignment metrics discussed in this section can be defined with any discrepancy between samples: for instance, \citet{buchin2025cdtw} compute CDTW between one-dimensional curves with the absolute difference between points, and measure the length of the warping path with the $L_1$ norm.
Throughout, we use the squared Euclidean distance, one of the most common choices, and the Euclidean length of the warping path.

\subsection{Dynamic Time Warping}

Dynamic Time Warping (DTW) is an alignment-based similarity measure for time series that was introduced independently by \citet{Vintsyuk1968} and \citet{SakoeChiba1978} for speech recognition applications.
Unlike the Euclidean distance, which matches observations strictly by their timestamps, DTW finds an optimal warping path through the cost matrix that minimizes the cumulative distance between two sequences, thereby accounting for local speed variations in the time axis.
Given two time series
\[
x = (x_0, \dots, x_{N_x-1}),\quad y = (y_0, \dots, y_{N_y-1})\,,
\]
with each $x_i, y_j\in\mathbb{R}^d$, DTW is defined as
\begin{equation}\label{eq:dtw}
  \mathrm{DTW}(x,y) \;:=\;\min_{\pi\in\mathcal A(x,y)}\;\Bigl(\sum_{(i,j)\in\pi} \lVert x_i-y_j\rVert_2^2\Bigr)^{1/2},
\end{equation}
where $\mathcal A(x,y)$ denotes the set of admissible warping paths from $(0,0)$ to $(N_x-1,N_y-1)$ satisfying the usual monotonicity and step-size constraints.
Although the number of admissible paths grows combinatorially, the optimal warping cost can be computed exactly via dynamic programming in $O(N_xN_yd)$ time.

However, the optimal warping path is a non-smooth function of the inputs, limiting its direct use in gradient-based optimization.
We therefore review two lines of work that relax different aspects of DTW: differentiability and its discrete, index-based alignment.

\subsection{Differentiable relaxations of DTW}

Soft-DTW~\citep{softdtw} replaces the minimum over admissible paths in Equation~\eqref{eq:dtw} by a soft-minimum, so that the resulting cost function $\mathrm{SoftDTW}_\gamma(x,y)$ reads as 
\[
-\gamma \log \sum_{\pi\in\mathcal A(x,y)} \exp \Bigl(-\frac{1}{\gamma}\sum_{(i,j)\in\pi} \lVert x_i-y_j\rVert_2^2\Bigr)
\]
with $\gamma > 0$ controlling the trade-off between smoothness and fidelity to the original DTW cost.
This cost can be computed with the same dynamic-programming recursion as DTW, replacing the hard minimum by a soft-minimum.
It is differentiable, with a gradient computable in $O(N_xN_yd)$ time and space via a backward recursion, which has made Soft-DTW the common choice for using DTW-like alignments as a training loss or inside a barycenter/clustering objective.
This smoothing has a side effect: rather than selecting the best-aligned path, the soft-minimum blends many competing alignments, which tends to blur sharp, localized features~\cite{softdtwdiv}.
Soft-DTW thus trades DTW's non-differentiability for a smoothing bias while keeping a discrete, index-based alignment.

\subsection{Continuous-time formulations}

A separate line of work instead formulates time warping directly in continuous time, replacing the warping path $\pi$ with a pair of continuous, monotonic reparametrizations of the two signals. %
This continuous counterpart to DTW is generally referred to as Continuous DTW (CDTW)~\citep{buchin2007frechet}.
More precisely, given two curves $x, y : [0,1] \rightarrow \mathbb{R}^d$, the CDTW functional is defined as:
\begin{equation}\label{eq:cdtw}
{\text{CDTW}}(x,y):= \min_{\phi, \psi \in \mathcal{C}}\mathcal{L}(\phi,\psi),
\end{equation}
where the alignment cost of a pair of warping maps is
\begin{equation}\label{eq:cost}
\mathcal{L} (\phi,\psi) := \int_{0}^{1}\bigl\lVert x\bigl(\phi(t)\bigr) - y\bigl(\psi(t)\bigr)\bigr\rVert_2^2
\bigl\lVert (\dot\phi(t), \dot\psi(t))\bigr\rVert_2
\;\mathrm{d}t,
\end{equation}
the dependence on the two curves $x$ and $y$ being left implicit and
with $\mathcal{C}$ being the set of increasing $C^1$-diffeomorphisms $\phi:[0,1] \rightarrow [0,1]$ satisfying $\phi(0)=0$ and $\phi(1)=1$.
Beyond the switch from a sum to an integral, the multiplicative factor $\lVert(\dot\phi,\dot\psi)\rVert_2$ is another key difference with DTW. %
Geometrically, $(\phi,\psi)$ is a path from $(0,0)$ to $(1,1)$ in the unit square and this factor is its length element, so $\mathcal{L}(\phi,\psi)$ is the length of the path weighted at each point $(u,v)$ by $\lVert x(u)-y(v)\rVert_2^2$~\citep{buchin2007frechet}. The cost therefore does not depend on the speed at which the path is traversed, which rules out uninformative solutions and allows reducing CDTW to a single warping (Proposition~\ref{prop:reparametrization}).

Computing CDTW is itself a hard problem.
\citet{buchin2025cdtw} proposed the first exact algorithm for CDTW, propagating continuous functions through a dynamic program that runs in $O(N_x^5)$ time but only for one-dimensional curves of at most $N_x$ samples each.
\citet{ergur2026cdtw} instead target an approximate solution, proposing a fully polynomial-time approximation scheme (FPTAS) of multiplicative error $\varepsilon$ for CDTW between piecewise algebraic curves of bounded degree, with complexity $O\bigl((N_x+N_y)^{19/6}(1/\varepsilon)^{10/3}\log((N_x+N_y)/\varepsilon^2)\bigr)$.
Both algorithms are restricted to specific curve representations (one-dimensional, or piecewise algebraic) rather than arbitrary sampled, multivariate time series, and neither is designed to produce a differentiable objective: they compute a single scalar distance value through a dynamic program, with no notion of a gradient with respect to the input signals.

GDTW~\citep{gdtw} takes a slightly different route: it warps only one of the signals, and replaces the multiplicative factor of Equation~\eqref{eq:cost} by adding two regularization terms, penalizing deviations of the warping $\phi$ from the identity and of its derivative from $1$.
These two regularizers play the same role as the multiplicative factor $\lVert(\dot\phi,\dot\psi)\rVert_2$ in CDTW but at the price of tuning two hyperparameters weighting each of the two terms.
Despite its continuous-time formulation, the problem above is solved by discretizing the space of warping values and running a dynamic program to find the optimal warping on this discretized grid, iteratively refining the grid to reach high accuracy.
As a result, GDTW remains, at its core, a dynamic-programming solver: it is poorly suited for autodifferentiation or to batched GPU computation, which limits its use as a building block inside larger, gradient-trained pipelines.

Taken together, these two lines of work each relax one of DTW's limitations while leaving the other largely untouched: Soft-DTW is differentiable but keeps the discrete alignment, while GDTW and CDTW solvers reformulate time warping in continuous time but remain fundamentally dynamic-programming algorithms, unfit for embedding inside models trained end to end. %
This is the gap our solver strategy, introduced in Section~\ref{sec:method}, aims to close.

%% file: sections/properties.tex
Before turning to its computation, we derive some  properties of the CDTW objective~\eqref{eq:cost}: the reduction to a single warping (Proposition~\ref{prop:reparametrization}), the fact that Lipschitz warpings reach the same optimal cost (Proposition~\ref{prop:grid-approx}), and the differentiability, with respect to an input curve, of CDTW restricted to a parametrized family of warpings (Proposition~\ref{prop:danskin}).
A similar reduction to a single path appears in~\citet[Lemma~1]{buchin2025cdtw}, but only for one-dimensional curves. To the best of our knowledge, Propositions~\ref{prop:grid-approx} and~\ref{prop:danskin} are new. All proofs are given in Appendix~\ref{app:proofs}.

\paragraph{Reparametrization.}
A property of the functional $\mathcal{L}$ is that the optimization can always be reduced to a single reparametrization,
\begin{restatable}{Proposition}{reparametrization}
\label{prop:reparametrization}
Let \(x,y:[0,1]\to\mathbb{R}^d\) be two continuously differentiable curves and let \(\mathcal{L}\) be as in \eqref{eq:cost}. Then for every $\phi, \psi \in \mathcal{C}$, there exists a reparametrization \(\tilde{\phi} \in \mathcal{C}\) of $x$ such that
\[
\mathcal{L}(\phi,\psi)
=\mathcal{L}\bigl(\tilde\phi,\mathrm{Id}\bigr),
\]
where
$
\tilde\phi \;=\; \phi\circ\psi^{-1}
\text{ and }
\mathrm{Id}(u)=u.
$
\end{restatable}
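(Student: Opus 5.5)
The plan is to show that $\mathcal{L}$ is invariant under reparametrization of the time variable $t$, and then to choose the reparametrization $s=\psi(t)$.

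\textbf{Step 1: $\tilde\phi$ lies in $\mathcal{C}$.} Since $\psi\in\mathcal{C}$ is an increasing $C^1$-diffeomorphism of $[0,1]$ fixing $0$ and $1$, its inverse $\psi^{-1}$ is also an increasing $C^1$-diffeomorphism. By the inverse function theorem, $(\psi^{-1})'(s)=1/\dot\psi(\psi^{-1}(s))$. Composition then shows that $\tilde\phi=\phi\circ\psi^{-1}$ is an increasing $C^1$-diffeomorphism with $\tilde\phi(0)=0$ and $\tilde\phi(1)=1$, so $\tilde\phi\in\mathcal{C}$.

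\textbf{Step 2: change of variables $s=\psi(t)$.} Here $\mathrm{d}s=\dot\psi(t)\,\mathrm{d}t$ with $\dot\psi(t)>0$. The integrand of $\mathcal{L}(\phi,\psi)$ becomes
\[
\bigl\lVert x(\phi(t))-y(\psi(t))\bigr\rVert_2^2=\bigl\lVert x(\tilde\phi(s))-y(s)\bigr\rVert_2^2 .
\]
For the length factor, use the positive homogeneity of the Euclidean norm:
\[
\bigl\lVert(\dot\phi(t),\dot\psi(t))\bigr\rVert_2=\dot\psi(t)\,\Bigl\lVert\Bigl(\tfrac{\dot\phi(t)}{\dot\psi(t)},1\Bigr)\Bigr\rVert_2 .
\]
By the chain rule, $\dot{\tilde\phi}(s)=\dot\phi(\psi^{-1}(s))\,(\psi^{-1})'(s)=\dot\phi(t)/\dot\psi(t)$. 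In addition, $\mathrm{Id}'=1$. The factor $\dot\psi(t)\,\mathrm{d}t$ is exactly $\mathrm{d}s$, so the integral becomes
\[
\int_0^1\bigl\lVert x(\tilde\phi(s))-y(s)\bigr\rVert_2^2\,\bigl\lVert(\dot{\tilde\phi}(s),1)\bigr\rVert_2\,\mathrm{d}s=\mathcal{L}(\tilde\phi,\mathrm{Id}).
\]
The bounds are preserved because $\psi(0)=0$ and $\psi(1)=1$.

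\textbf{Justifying the change of variables.} The integrand is continuous on $[0,1]$, since $x$ and $y$ are $C^1$ (continuity alone suffices) and $\phi,\psi$ are $C^1$. The standard substitution rule for Riemann integrals with a $C^1$ monotone substitution therefore applies.

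\textbf{Main obstacle.} The main point of care is the condition $\dot\psi>0$ everywhere. It is needed for $\psi^{-1}$ to be $C^1$ and for the division by $\dot\psi$. It comes from the diffeomorphism assumption: a $C^1$ increasing bijection could have $\dot\psi=0$ somewhere, but then its inverse would not be differentiable. I would state explicitly that membership in $\mathcal{C}$ forces $\dot\psi(t)>0$ for all $t$.
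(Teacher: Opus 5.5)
Your proof is correct and follows essentially the same route as the paper: the substitution $u=\psi(t)$ with $\tilde\phi=\phi\circ\psi^{-1}$, the chain-rule identity $\dot{\tilde\phi}=\dot\phi/\dot\psi$ evaluated at $\psi^{-1}(u)$, and positive homogeneity of the norm, which absorbs the Jacobian factor $1/\dot\psi$ into $\sqrt{\dot{\tilde\phi}^2+1}$. Your explicit checks that $\tilde\phi\in\mathcal C$ and that $\dot\psi>0$ follows from the diffeomorphism assumption are small additions; the paper only states the latter in passing and leaves the former implicit.
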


From now on we therefore optimize over a single warping map $\phi$, and write $\mathcal{L}(\phi):=\mathcal{L}(\phi,\mathrm{Id})$.

\paragraph{Lipschitz maps.}
The set $\mathcal C$ in~\eqref{eq:cdtw} only contains smooth, strictly increasing warpings, whereas practical warpings such as piecewise-linear ones can have corners and flat pieces. The following proposition shows that allowing such warpings does not decrease the optimal cost.

\begin{restatable}{Proposition}{gridapprox}
\label{prop:grid-approx}
Let $x,y:[0,1]\to\mathbb R^d$ be continuous, and let $\mathcal C_{\mathrm{Lip}}$ be the set of nondecreasing Lipschitz maps $\phi:[0,1]\to[0,1]$ with $\phi(0)=0$ and $\phi(1)=1$. Then
\[
\inf_{\phi\in\mathcal C_{\mathrm{Lip}}}\mathcal L(\phi)=\inf_{\phi\in\mathcal C}\mathcal L(\phi)=\mathrm{CDTW}(x,y).
\]
\end{restatable}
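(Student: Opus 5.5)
Since $\mathcal C\subset\mathcal C_{\mathrm{Lip}}$ (a $C^1$ map on a compact interval is Lipschitz), $\inf_{\mathcal C_{\mathrm{Lip}}}\mathcal L\le\inf_{\mathcal C}\mathcal L$ is immediate. The last equality, $\inf_{\mathcal C}\mathcal L(\phi)=\mathrm{CDTW}(x,y)$, follows from Proposition~\ref{prop:reparametrization}, which gives $\mathcal L(\phi,\psi)=\mathcal L(\phi\circ\psi^{-1})$ for any pair, together with the trivial inclusion of pairs $(\phi,\mathrm{Id})$. Reading the $\min$ in \eqref{eq:cdtw} as an infimum, only the reverse inequality $\inf_{\mathcal C}\mathcal L\le\inf_{\mathcal C_{\mathrm{Lip}}}\mathcal L$ remains. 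For this I fix $\phi\in\mathcal C_{\mathrm{Lip}}$ and $\varepsilon>0$, and I aim to construct $\phi_\varepsilon\in\mathcal C$ with $\mathcal L(\phi_\varepsilon)\le\mathcal L(\phi)+\varepsilon$.

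The construction has two steps. First I make the map strictly increasing with $\phi_\delta=(1-\delta)\phi+\delta\,\mathrm{Id}$. It stays Lipschitz, satisfies $\dot\phi_\delta\ge\delta$ a.e., and keeps the endpoints. Second I smooth the derivative. Let $g=\dot\phi_\delta\in L^\infty$ with $g\ge\delta$. Mollify it (extending $g$ by $1$ outside $[0,1]$) to obtain a continuous $g_\eta\ge\delta$ with $g_\eta\to g$ in $L^1$ and $\|g_\eta\|_\infty\le\|g\|_\infty$. Then set $\phi_{\delta,\eta}(t)=\int_0^t g_\eta\big/\int_0^1 g_\eta$. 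This map is $C^1$, strictly increasing with derivative bounded below, and fixes $0$ and $1$, so it is a $C^1$-diffeomorphism in $\mathcal C$. Its derivative converges to $g$ in $L^1$ because the normalizing constant tends to $\int_0^1 g=1$, and it converges uniformly to $\phi_\delta$.

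It remains to pass to the limit in $\mathcal L$. Write $\mathcal L(\phi)=\int_0^1 F(\phi(t),t)\sqrt{\dot\phi(t)^2+1}\,dt$ with $F(u,t)=\|x(u)-y(t)\|_2^2$. Here $F$ is continuous on the compact square, hence bounded by some $M$ and uniformly continuous. For maps $\phi_n\to\phi$ uniformly with $\dot\phi_n\to\dot\phi$ in $L^1$, I split the difference into two terms:
\[
\int_0^1 \bigl|F(\phi_n,t)-F(\phi,t)\bigr|\sqrt{\dot\phi_n^2+1}\,dt \;+\; M\int_0^1\Bigl|\sqrt{\dot\phi_n^2+1}-\sqrt{\dot\phi^2+1}\Bigr|\,dt .
\]
The first term tends to $0$ by uniform continuity of $F$ and the uniform bound on $\dot\phi_n$. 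The second is at most $M\|\dot\phi_n-\dot\phi\|_{L^1}$, because $s\mapsto\sqrt{s^2+1}$ is $1$-Lipschitz. This continuity argument gives $\mathcal L(\phi_\delta)\to\mathcal L(\phi)$ as $\delta\to0$, since $\dot\phi_\delta=(1-\delta)\dot\phi+\delta$. It also gives $\mathcal L(\phi_{\delta,\eta})\to\mathcal L(\phi_\delta)$ as $\eta\to0$. A diagonal choice of $\delta$ and $\eta$ then yields the desired $\phi_\varepsilon$.

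The main obstacle is the smoothing step: it has to preserve the boundary conditions and strict monotonicity while giving $L^1$ convergence of the derivatives with a uniform $L^\infty$ bound. Renormalizing a mollified, strictly positive derivative handles all of these together. Note that only continuity of $x$ and $y$ is used, consistent with the hypothesis of the statement.
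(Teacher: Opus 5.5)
Your proof is correct and follows essentially the same route as the paper: regularize $\dot\phi$ in $L^1$ while keeping a uniform $L^\infty$ bound and a positive floor, renormalize so the primitive fixes $0$ and $1$, then conclude with continuity of $\mathcal L$ under $L^1$ convergence of derivatives, using the $1$-Lipschitz map $s\mapsto\sqrt{s^2+1}$ and uniform continuity. The paper does this in one step with $\tilde g_n=(1-\frac1n)g_n/c_n+\frac1n$, whereas you add the floor before mollifying; you also spell out the equality $\inf_{\mathcal C}\mathcal L=\mathrm{CDTW}(x,y)$ via Proposition~\ref{prop:reparametrization}, which the paper leaves implicit and whose proof never uses differentiability of $x,y$, so it holds here.
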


\paragraph{Differentiability.}
CDTW can be used as a training loss on a curve $y$ produced by a network. The variable to optimize is then $y\in \mathcal{Y}:= L^2([0,1],\mathbb R^d)$ and we write $\mathcal L_y(\phi)$ to make its dependence on $y$ explicit. In practice, $\phi$ is not optimized over all of $\mathcal C_{\mathrm{Lip}}$ but over a family $\{\phi_\theta:\theta\in\Theta \subset\mathbb R^p\}$ indexed by a finite-dimensional parameter, the family used in this paper is introduced in Section~\ref{sec:param}. The quantity to differentiate is then $\mathrm{CDTW}_\Theta(x,y):=\inf_{\theta\in\Theta}\mathcal L_y(\phi_\theta)$, itself defined by a minimization over $\theta$.
The following proposition states that it is enough to differentiate $\mathcal L_y$ at an optimal warping, held fixed.

\begin{restatable}{Proposition}{danskin}
\label{prop:danskin}
\mainorrestated{%
Assume that $x$ is continuous on $[0,1]$, that $\Theta\subset\mathbb R^p$ is compact, that for every $\theta\in\Theta$ the map $\phi_\theta:[0,1]\to[0,1]$ is nondecreasing and Lipschitz with $\phi_\theta(0)=0$ and $\phi_\theta(1)=1$, and that $\theta\mapsto\dot\phi_\theta$ is continuous from $\Theta$ to $L^\infty([0,1])$. Let $y\in\mathcal Y$. Then:%
}{%
Assume that
\begin{itemize}
\item[(A1)] $x$ is continuous on $[0,1]$;
\item[(A2)] $\Theta\subset\mathbb R^p$ is compact;
\item[(A3)] for every $\theta\in\Theta$, $\phi_\theta:[0,1]\to[0,1]$ is nondecreasing and Lipschitz, with $\phi_\theta(0)=0$ and $\phi_\theta(1)=1$;
\item[(A4)] $\theta\mapsto\dot\phi_\theta$ is continuous from $\Theta$ to $L^\infty([0,1])$,
\end{itemize}
and let $y\in\mathcal Y$. Then:%
}
\begin{itemize}
\item[(i)] the set $\Phi(y):=\arg\min_{\theta\in\Theta}\mathcal L_y(\phi_\theta)$ is nonempty and compact, so the infimum defining $\mathrm{CDTW}_\Theta(x,y)$ is a minimum;
\item[(ii)] for every $h\in\mathcal Y$, $y\mapsto\mathrm{CDTW}_\Theta(x,y)$ has at $y$ a one-sided directional derivative in the direction $h$, given by
\begin{equation}\label{eq:fmtw-gradient-y}
\partial_h\,\mathrm{CDTW}_\Theta(x,y)=\min_{\theta\in\Phi(y)}\partial_h\mathcal L_y(\phi_\theta),
\end{equation}
where, for every Lipschitz $\phi:[0,1]\to[0,1]$,
\begin{equation}\label{eq:dir-deriv-explicit}
\partial_h\mathcal L_y(\phi)=-2\int_0^1\sqrt{\dot\phi^2+1}\,\bigl\langle x\circ\phi-y,\,h\bigr\rangle\,\mathrm dt .
\end{equation}
\end{itemize}
\end{restatable}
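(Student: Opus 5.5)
The plan is a Danskin-type argument for the map $(\theta,y)\mapsto \mathcal L_y(\phi_\theta)$ on $\Theta\times\mathcal Y$. We write
\[
F(\theta,y):=\mathcal L_y(\phi_\theta)=\int_0^1 \bigl\lVert x(\phi_\theta(t))-y(t)\bigr\rVert_2^2\,\sqrt{\dot\phi_\theta(t)^2+1}\,\mathrm dt ,
\]
using the single-warping form $\mathcal L(\phi)=\mathcal L(\phi,\mathrm{Id})$ from Proposition~\ref{prop:reparametrization}. We then verify the hypotheses of Danskin's theorem for the minimum of a family of functions:
\begin{itemize}
\item $F$ is continuous in $(\theta,y)$ on the compact set $\Theta$;
\item $F(\theta,\cdot)$ is directionally differentiable;
\item the derivative $\partial_hF(\theta,y)$ is jointly continuous in $(\theta,y)$.
\end{itemize}
Part (i) then follows from Weierstrass, and part (ii) from the standard envelope/Danskin argument for minima, with formula \eqref{eq:fmtw-gradient-y}.

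\textbf{Step 1: continuity in $\theta$ at fixed $y$.} Define $w_\theta:=\sqrt{\dot\phi_\theta^2+1}$. By (A4), $\theta\mapsto w_\theta$ is continuous into $L^\infty$, since $s\mapsto\sqrt{s^2+1}$ is $1$-Lipschitz. Integrating $\dot\phi_\theta$ gives $\lVert\phi_\theta-\phi_{\theta'}\rVert_\infty\le\lVert\dot\phi_\theta-\dot\phi_{\theta'}\rVert_{L^\infty}$. By (A1), $x$ is uniformly continuous and bounded on $[0,1]$, so $x\circ\phi_\theta\to x\circ\phi_{\theta'}$ uniformly. Continuity of $\Theta$ and (A4) also give a uniform bound $M:=\sup_\theta\lVert w_\theta\rVert_\infty<\infty$. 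Expanding the square, with $y\in L^2$ and $x\circ\phi_\theta$ bounded, shows that $F(\cdot,y)$ is continuous. The same estimates show that $F$ is locally Lipschitz in $y$ uniformly in $\theta$:
\[
|F(\theta,y)-F(\theta,y')|\le M\,\lVert y-y'\rVert_{L^2}\bigl(\lVert x\circ\phi_\theta-y\rVert_{L^2}+\lVert x\circ\phi_\theta-y'\rVert_{L^2}\bigr).
\]
Compactness of $\Theta$ (A2) then gives (i): $\Phi(y)$ is nonempty, and it is closed in a compact set, hence compact.

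\textbf{Step 2: explicit derivative.} The map $F(\theta,\cdot)$ is a quadratic functional in $y$, so
\[
F(\theta,y+sh)=F(\theta,y)-2s\int_0^1 w_\theta\,\langle x\circ\phi_\theta-y,h\rangle\,\mathrm dt+s^2\int_0^1 w_\theta\lVert h\rVert^2\,\mathrm dt .
\]
This gives \eqref{eq:dir-deriv-explicit} directly. It also shows that the remainder is bounded by $s^2M\lVert h\rVert_{L^2}^2$ uniformly in $\theta$, which is the key to Step 3. The derivative $\partial_hF(\theta,y)$ is continuous in $\theta$ by Step 1.

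\textbf{Step 3: Danskin argument for $V(y):=\min_\theta F(\theta,y)$.} This is the main obstacle, although the uniform quadratic remainder makes it manageable.

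For the upper bound, take any $\theta\in\Phi(y)$. Then $V(y+sh)\le F(\theta,y+sh)$, so
\[
\limsup_{s\downarrow0}\frac{V(y+sh)-V(y)}{s}\le\min_{\theta\in\Phi(y)}\partial_hF(\theta,y).
\]

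For the lower bound, pick $\theta_s\in\Phi(y+sh)$. Then
\[
V(y+sh)-V(y)\ge F(\theta_s,y+sh)-F(\theta_s,y)\ge s\,\partial_hF(\theta_s,y)-O(s^2).
\]
Take a sequence $s_k\downarrow0$ realizing the $\liminf$. By compactness, extract $\theta_{s_k}\to\bar\theta$. The uniform-in-$\theta$ continuity of $F$ in $y$ from Step 1 gives $\bar\theta\in\Phi(y)$. Continuity of $\partial_hF(\cdot,y)$ then yields a $\liminf$ of at least $\partial_hF(\bar\theta,y)\ge\min_{\Phi(y)}\partial_hF$.

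Combining the two bounds gives \eqref{eq:fmtw-gradient-y}. The only delicate point is keeping every estimate uniform over $\theta\in\Theta$, which (A2) and (A4) secure.
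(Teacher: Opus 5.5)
Your proposal is correct and follows essentially the same route as the paper. Both arguments rest on the exact quadratic expansion of $\mathcal L_{y+sh}(\phi_\theta)$ in $s$ with a remainder controlled uniformly in $\theta$ (the paper simply drops this nonnegative remainder in the lower bound instead of writing $-O(s^2)$), joint continuity of $(\theta,y)\mapsto\mathcal L_y(\phi_\theta)$ and continuity of $\theta\mapsto\partial_h\mathcal L_y(\phi_\theta)$, and a Danskin argument with the upper bound from a fixed $\theta^\star\in\Phi(y)$ and the lower bound from minimizers $\theta_{s_k}\in\Phi(y+s_kh)$ extracted to a limit in $\Phi(y)$; the only slip is that your uniform bound $M$ comes from compactness of $\Theta$ (A2), not ``continuity of $\Theta$''.
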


%% file: sections/method.tex
Since the warping $\phi$ above is a function rather than a sequence of index pairs, it can be optimized by gradient descent instead of dynamic programming, provided the cost of a candidate $\phi$ can be evaluated cheaply and differentiated. Section~\ref{sec:methodology} gives two ways of evaluating this cost.
Unlike the exact CDTW solver of Section~\ref{sec:background}, we do not claim to return the global optimum: we return the value at the warping our optimizer reaches, for a cost that makes it usable inside a model trained end to end.

\subsection{Evaluating the alignment cost}\label{sec:methodology}

We propose two schemes to evaluate $\mathcal{L}(\phi)$.
First, the \emph{Piecewise-linear} scheme, which assumes $\phi$, $x$ and $y$ are piecewise-linear functions of time, evaluates $\mathcal{L}(\phi)$ in closed form, without any discretization error (Proposition~\ref{prop:pl-closed-form}).
Second, the \emph{Monte-Carlo} scheme, which only assumes $\phi$ and $\dot\phi$ can be evaluated pointwise, estimates $\mathcal{L}(\phi)$ by Monte Carlo sampling (Equation~\eqref{eq:mc-estimator}).

\paragraph{Piecewise-Linear (PL) scheme.}
In most real-world applications we do \emph{not} have access to the exact continuous maps $x$ and $y$: we are only given samples
\[
\{(t_{x,i},\,x_i)\}_{i=0}^{N_x-1},\qquad
\{(t_{y,j},\,y_j)\}_{j=0}^{N_y-1}.
\]
We denote by $x_\mathrm{PL}$ and $y_\mathrm{PL}$ their piecewise-linear interpolants, and by $\phi_{x,\mathrm{PL}}$ the piecewise-linear map defined by values $\{\phi_{x,k}\}_{k=0}^{N_\phi}$ on the uniform grid $t_k=k/N_\phi$.

In this case, the integral of $\mathcal{L}(\phi)$ splits over intervals and each piece integrates in closed form.

\begin{restatable}{Proposition}{plclosedform}
\label{prop:pl-closed-form}
Assume that $\phi_{x,\mathrm{PL}}$ is strictly increasing, so that each preimage $\phi_{x,\mathrm{PL}}^{-1}(t_{x,i})$ is a single point.
Let $\{\tau_k\}_{k=0}^{M}$ be the strictly increasing union of
\[
\bigl\{\phi_{x,\mathrm{PL}}^{-1}(t_{x,i})\bigr\}_{i=0}^{N_x-1},\quad
\bigl\{t_{y,j}\bigr\}_{j=0}^{N_y-1},\quad
\bigl\{t_k=k/N_\phi\bigr\}_{k=0}^{N_\phi},
\]
and set, for $k=0,\dots,M-1$,
\[
\Delta\tau_k=\tau_{k+1}-\tau_k,\qquad
\Delta\phi_k=\phi_{x,\mathrm{PL}}(\tau_{k+1})-\phi_{x,\mathrm{PL}}(\tau_k),
\]
\[
d_k=x_\mathrm{PL}\bigl(\phi_{x,\mathrm{PL}}(\tau_k)\bigr)-y_\mathrm{PL}(\tau_k),\quad k=0,\dots,M.
\]
Then
\begin{equation}\label{eq:pl-closed-form}
\mathcal{L}(\phi_{x,\mathrm{PL}})=\frac13\sum_{k=0}^{M-1}\sqrt{\Delta\tau_k^{2}+\Delta\phi_k^{2}} \quad\times\Bigl(\lVert d_k\rVert_2^2+\langle d_k,d_{k+1}\rangle+\lVert d_{k+1}\rVert_2^2\Bigr).
\end{equation}
\end{restatable}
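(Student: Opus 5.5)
The strategy is to split the integral defining $\mathcal{L}(\phi_{x,\mathrm{PL}})=\mathcal{L}(\phi_{x,\mathrm{PL}},\mathrm{Id})$ over the subintervals $[\tau_k,\tau_{k+1}]$. On each piece both the speed factor and the discrepancy are elementary, so each piece can be integrated exactly.

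\textbf{Step 1: everything is affine on each subinterval.} By construction, $\{t_k\}\subset\{\tau_k\}$, so $\phi_{x,\mathrm{PL}}$ is affine on $[\tau_k,\tau_{k+1}]$ with slope $\Delta\phi_k/\Delta\tau_k$. Next, since $\phi_{x,\mathrm{PL}}$ is strictly increasing and continuous, and every preimage $\phi_{x,\mathrm{PL}}^{-1}(t_{x,i})$ is among the $\tau$'s, the image $\phi_{x,\mathrm{PL}}([\tau_k,\tau_{k+1}])$ contains no knot $t_{x,i}$ in its interior. Hence it lies in a single linear piece of $x_\mathrm{PL}$, and $x_\mathrm{PL}\circ\phi_{x,\mathrm{PL}}$ is a composition of affine maps, so it is affine on $[\tau_k,\tau_{k+1}]$. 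Likewise, the knots $t_{y,j}$ are among the $\tau$'s, so $y_\mathrm{PL}$ is affine there. It follows that $d(t):=x_\mathrm{PL}(\phi_{x,\mathrm{PL}}(t))-y_\mathrm{PL}(t)$ is affine on the piece, and by continuity at the endpoints
\[
d(\tau_k+s\Delta\tau_k)=(1-s)d_k+s\,d_{k+1},\qquad s\in[0,1].
\]
I assume here that the endpoints $0,1$ appear among the $\tau$'s (they do, as $t_0$ and $t_{N_\phi}$) and that the sample times cover $[0,1]$, as is implicit in the definition of the interpolants.

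\textbf{Step 2: the length factor is constant.} On the open piece, $\lVert(\dot\phi,1)\rVert_2=\sqrt{(\Delta\phi_k/\Delta\tau_k)^2+1}$. After the change of variables $t=\tau_k+s\Delta\tau_k$, with $\mathrm dt=\Delta\tau_k\,\mathrm ds$, this becomes the constant $\sqrt{\Delta\tau_k^2+\Delta\phi_k^2}$ multiplying $\int_0^1\lVert(1-s)d_k+s d_{k+1}\rVert_2^2\,\mathrm ds$. The finitely many breakpoints where $\dot\phi$ is undefined form a null set and do not affect the integral.

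\textbf{Step 3: integrate the quadratic.} Expand the integrand as
\[
(1-s)^2\lVert d_k\rVert^2+2s(1-s)\langle d_k,d_{k+1}\rangle+s^2\lVert d_{k+1}\rVert^2,
\]
and use $\int_0^1(1-s)^2\,\mathrm ds=\int_0^1 s^2\,\mathrm ds=1/3$ and $\int_0^1 2s(1-s)\,\mathrm ds=1/3$. This gives $\tfrac13(\lVert d_k\rVert^2+\langle d_k,d_{k+1}\rangle+\lVert d_{k+1}\rVert^2)$ for each piece, and summing over $k$ yields \eqref{eq:pl-closed-form}.

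The only real obstacle is Step 1, namely justifying that $x_\mathrm{PL}\circ\phi_{x,\mathrm{PL}}$ is affine on each piece. This is exactly where strict monotonicity is needed: it makes the preimages well-defined single points, so that the breakpoints of the composition are captured in $\{\tau_k\}$. Everything else is the routine calculation above.
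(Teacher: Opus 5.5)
Your proposal is correct and follows the paper's proof essentially step for step. You split the integral at the merged breakpoints $\{\tau_k\}$, note that $\phi_{x,\mathrm{PL}}$, $x_\mathrm{PL}\circ\phi_{x,\mathrm{PL}}$ and $y_\mathrm{PL}$ are all affine on each $[\tau_k,\tau_{k+1}]$ (so the speed factor is constant and the residual interpolates linearly between $d_k$ and $d_{k+1}$), and then integrate the quadratic after rescaling to $s\in[0,1]$, where $\Delta\tau_k$ cancels; the only cosmetic difference is that you expand $\lVert(1-s)d_k+s\,d_{k+1}\rVert_2^2$ in $(1-s)^2$, $2s(1-s)$, $s^2$ rather than coordinatewise as the paper does.
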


\paragraph{Monte-Carlo (MC) scheme.}
For any warping $\phi$, we can instead estimate $\mathcal{L}(\phi)$ by drawing $S$ i.i.d.\ samples $(t_i)_{i=1}^S\sim \mathcal{U}([0,1])$ and set
\begin{equation}\label{eq:mc-estimator}
\widehat{\mathcal{L}}^S(\phi)\;=\;\frac1S\sum_{s=1}^S \lVert x(\phi(t_s))-y(t_s)\rVert_2^2\sqrt{\dot\phi(t_s)^{2}+1}.
\end{equation}
It is well known that this estimator is unbiased and decays as $1/S$: the dimension and the sampling of $x$ and $y$ affect the multiplicative constant, but not this rate.
This estimation strategy requires nothing beyond pointwise evaluability of $\phi$, $\dot\phi$, $x$ and $y$.

\subsection{Parametrization of $\phi$}\label{sec:param}
Several options are available for the warping $\phi$, as long as they result in a differentiable and monotonic function. A natural choice could be monotonic-by-construction Neural Networks~(\textit{e.g.} \citep{Wehenkel2019UMNN,runje2023constrained}). Yet, in practice, we found that a simple piecewise-linear parametrization of $\phi$ on a uniform grid is sufficient to capture the warping between two time series, while being computationally efficient.
We hence adopt the grid parametrization of~\citet{rhodes2024timewarpvaesimultaneoustimewarpingrepresentation}, with a SoftPlus activation, which can be evaluated by both schemes: $\theta \in\mathbb R^{N_\phi}$ gives increments $\delta_j=\mathrm{SoftPlus}(\theta_j)>0$ whose normalized cumulative sums $\phi_{x,k}=\sum_{j=1}^{k}\delta_j/\sum_{j=1}^{N_\phi}\delta_j$, are the breakpoint values of a piecewise-linear $\phi$ on the uniform grid $t_k=k/N_\phi$, so that $\phi_{x,0}=0$ and $\phi_{x,N_\phi}=1$.
The grid size $N_\phi$ sets a trade-off: a finer grid lets $\phi$ represent sharper, more localized warpings, with slopes up to $N_\phi$, but increases the cost of each evaluation (linearly in $N_\phi$ for the PL scheme), while a coarser grid yields warpings that are cheaper to evaluate and easier to optimize, at the risk of missing fine alignments.
This warping belongs to $\mathcal C_{\mathrm{Lip}}$ and is strictly increasing, since $\delta_j>0$, so Propositions~\ref{prop:grid-approx} and~\ref{prop:pl-closed-form} apply.
Since $\phi_\theta$ only depends on the normalized increments $\beta_k=\delta_k/\sum_j\delta_j$, Proposition~\ref{prop:danskin} applies with $\Theta$ the closed simplex $\Delta\subset\mathbb R^{N_\phi}$, which is compact. SoftPlus only reaches the interior of $\Delta$, which leaves the minimum unchanged by density of the interior and continuity of $\beta\mapsto\mathcal L_y(\phi_\beta)$ (Lemma~\ref{lem:app-continuity}). Flat pieces of $\phi$ ($\beta_k=0$) are only approached as $\theta_k\to-\infty$, where the gradient vanishes.
In Figure~\ref{fig:alignment}, we show the alignments obtained by our FastCDTW solver for different values of $N_\phi$, the number of breakpoints in the warping function $\phi$. Even moderate values of $N_\phi$ lead to warping paths that are smoother than DTW, and distance estimation (right panel) converges quickly.

\paragraph{Symmetry.}
\llm{CDTW is symmetric: in the single-map form of Proposition~\ref{prop:reparametrization}, writing $\mathcal L_{x,y}$ for the cost~\eqref{eq:cost} of the pair $(x,y)$, the change of variables $u=\phi(t)$ gives $\mathcal L_{x,y}(\phi)=\mathcal L_{y,x}(\phi^{-1})$, and $\phi^{-1}\in\mathcal C$ whenever $\phi\in\mathcal C$, so the optimal warping for $(y,x)$ is the inverse of the one for $(x,y)$.
The grid parametrization above loses this property: the inverse of a strictly increasing piecewise-linear map on the uniform grid is piecewise linear, but with breakpoints at the non-uniform locations $\phi(t_k)$, so the family is not closed under inversion and FastCDTW is asymmetric in general.
Both directions converge to $\mathrm{CDTW}(x,y)$ as $N_\phi\to\infty$, so the asymmetry vanishes as the grid is refined.
We do not symmetrize FastCDTW in our experiments, and have not observed this asymmetry to cause any problem.}

\begin{figure}[t]
    \includegraphics[width=\columnwidth]{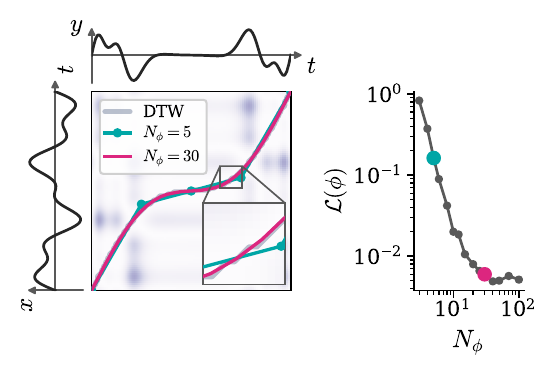}
    \caption{Alignments obtained using FastCDTW for different values of $N_\phi$, the number of breakpoints in the warping function $\phi$. The two time series only differ by a central stretching over a null value, yielding a zero DTW distance (up to sampling artifacts).\label{fig:alignment}}
\end{figure}

\subsection{Time complexities}\label{sec:complexity}

With the PL scheme, the breakpoints of Proposition~\ref{prop:pl-closed-form} are found by one sweep over those of $x_\mathrm{PL}\circ\phi_{x,\mathrm{PL}}$ and $y_\mathrm{PL}$, and each interval costs $O(d)$, so one evaluation of $\mathcal{L}(\phi)$ costs $O((N_x+N_y+N_\phi)d)$. With the MC scheme, the $N_\phi$ breakpoint values of $\phi$ are computed once, in $O(N_\phi)$, after which the uniform grid gives the interval of each sample directly by a floor operation, so $\phi$ and $\dot\phi$ are evaluated in $O(1)$. Denoting by $C_x$ and $C_y$ the costs of evaluating $x$ and $y$ at a single time, one evaluation then costs $O(N_\phi+S(d+C_x+C_y))$.
For sampled series, each sample locates its time among the samples of $x$ and $y$ by binary search, so $C_x=O(\log N_x)$ and $C_y=O(\log N_y)$ (or $O(1)$ for regularly sampled series).
When a curve is instead represented by a neural network, such as the INR barycenter of Section~\ref{subsec:averaging}, its evaluation cost is that of a forward pass through the network.
In all cases, automatic differentiation gives the gradient at the same cost (see discussion in Appendix~\ref{app:backward_cost}).
Used as a distance, FastCDTW minimizes $\mathcal{L}$ with $n_\text{iter}$ gradient steps, which multiplies these costs by $n_\text{iter}$ (Sections~\ref{subsec:averaging} and~\ref{subsec:classification}); when $\phi$ is shared across samples and trained with a model (Section~\ref{subsec:mad}), it takes one step per model update, so $n_\text{iter}=1$.
Table~\ref{tab:complexity-comparison} compares these costs with the baselines of Section~\ref{sec:background}. FastCDTW is far cheaper than the exact and approximate CDTW solvers, and than GDTW, which costs $O(N_\phi^3)$ when its grid has the resolution of our $\phi$. 
Compared with the $O(N_xN_yd)$ of DTW and Soft-DTW, with $N_x=N_y=T$ and $N_\phi\leq T$, FastCDTW is cheaper, up to constants, only when $T\gtrsim n_\text{iter}$ for the PL scheme and $T^2d\gtrsim n_\text{iter}S(d+\log T)$ for the MC scheme.
Moreover, all computations induced in FastCDTW (across the intervals for PL, across Monte Carlo samples for MC) can be efficiently parallelized on a GPU (see Figure~\ref{fig:timings} and Appendix~\ref{app:nn}).

\begin{table}[t]
\centering
\resizebox{\columnwidth}{!}{%
\begin{tabular}{lc}
\toprule
Method & Time complexity \\
\midrule
DTW~\citep{Vintsyuk1968,SakoeChiba1978} & $N_xN_yd$ \\
Soft-DTW~\citep{softdtw} & $N_xN_yd$ \\
GDTW~\citep{gdtw} & $N_\phi^3$ \\
Exact CDTW~\citep{buchin2025cdtw} & $N_x^5$ (restricted to $d=1$) \\
Approx.\ CDTW (FPTAS)~\citep{ergur2026cdtw} & $(N_x{+}N_y)^{19/6}(1/\varepsilon)^{10/3}\log\frac{N_x+N_y}{\varepsilon^2}$ \\
\midrule
FastCDTW-PL (ours) & $(N_x{+}N_y{+}N_\phi)\,d$ \\
FastCDTW-MC (ours) & $N_\phi + S(d+\log N_x+\log N_y)$ \\
\bottomrule
\end{tabular}%
}
\caption{Time complexity of computing one alignment/distance value across the different methods. FastCDTW's rows are for a single evaluation of $\mathcal{L}(\phi)$.}
\label{tab:complexity-comparison}
\end{table}

%% file: sections/experiments/averaging.tex
\paragraph{Experimental setup.}
We average $n$ series of length $T=200$ drawn from three synthetic families,
\texttt{accel} ($n=12$), \texttt{ecg} ($n=12$) and \texttt{rate} ($n=8$), each
built so that the series differ from a common template by a time
reparametrization only. 
We compare
five barycenters: the Euclidean mean; DTW Barycenter Averaging (DBA)~\citep{dba}; the Soft-DTW
barycenter~\citep{softdtw}; and two FastCDTW barycenters.
The first FastCDTW barycenter is a free vector of samples, evaluated with the PL scheme; the second is an implicit neural representation (INR)~\citep{naour2024time}, i.e.\ a function of continuous time, evaluated with the MC scheme (see Appendix~\ref{app:averaging} for details). They show that FastCDTW can handle both sampled series and continuous-time curves.
The Euclidean mean is used as initialization for all the barycenters.

\begin{figure*}[t]
 \centering
 \includegraphics[width=\textwidth]{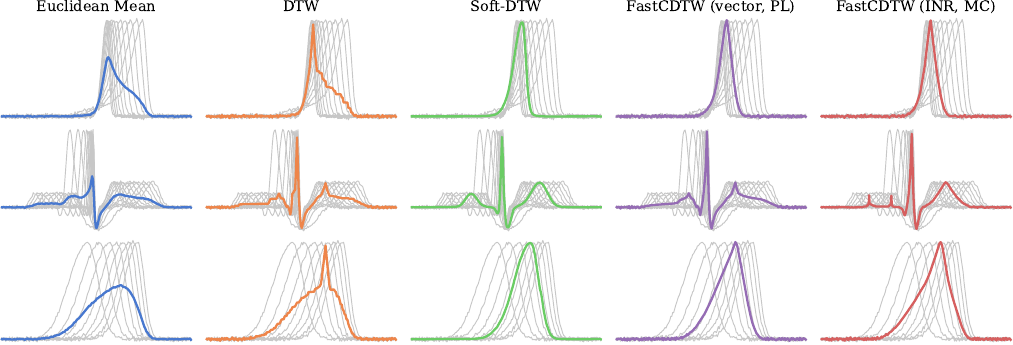}
 \caption{Barycenters of \texttt{accel} (top), \texttt{ecg} (middle) and
 \texttt{rate} (bottom). Grey: the averaged series.}
 \label{fig:barycenters-grid}
\end{figure*}

\paragraph{Quantitative results.}
Table~\ref{tab:averaging-losses} scores these barycenters under DTW, as the mean and standard deviation over the ten draws, with Soft-DTW taken, on each family, at the temperature $\gamma$ with the lowest mean DTW loss. Table~\ref{tab:averaging-gamma-sweep} and Figure~\ref{fig:gamma-sweep} give the full sweep over $\gamma$. 
On \texttt{accel} and \texttt{rate}, FastCDTW-PL
attains a lower DTW loss than DBA and than Soft-DTW, although it does not
optimize that objective. FastCDTW-MC does too on \texttt{rate}, and on
\texttt{accel} it improves on DBA but not on Soft-DTW. On \texttt{ecg}, Soft-DTW at
$\gamma=10^{-2}$ is the best of all methods.

\paragraph{Qualitative results.}
Figure~\ref{fig:barycenters-grid} shows the barycenters on the three
families, with Soft-DTW at the temperature selected above. %
Visually, both Soft-DTW and FastCDTW produce convincing barycenters that recover the shape of the common template.
On \texttt{ecg}, however, the Soft-DTW barycenter slightly erodes the central peak, a known side effect of the smoothing induced by its soft-minimum~\citep{softdtwdiv}, whereas the FastCDTW barycenters preserve it.
\llm{This erosion could be avoided by using the Soft-DTW divergence~\citep{softdtwdiv}, which corrects this bias, instead of Soft-DTW itself.}

\begin{table}[t]
 \centering
 \small
 \setlength{\tabcolsep}{3pt}
 \begin{tabular}{@{}lrrr@{}}
 \toprule
 method & \texttt{accel} & \texttt{ecg} & \texttt{rate} \\
 \midrule
 Euclidean mean & $11.88${\scriptsize$\pm$0.11} & $14.81${\scriptsize$\pm$0.06} & $13.19${\scriptsize$\pm$0.10} \\
 DTW (DBA) & $3.16${\scriptsize$\pm$0.06} & $4.43${\scriptsize$\pm$0.25} & $1.48${\scriptsize$\pm$0.08} \\
 \midrule
 Soft-DTW (best $\gamma$) & $2.31${\scriptsize$\pm$0.04} & $\mathbf{4.02}${\scriptsize$\pm$\textbf{0.03}} & $1.19${\scriptsize$\pm$0.02} \\
 \midrule
 FastCDTW-PL & $\mathbf{2.20}${\scriptsize$\pm$\textbf{0.02}} & $4.82${\scriptsize$\pm$0.43} & $\mathbf{1.08}${\scriptsize$\pm$\textbf{0.01}} \\
 FastCDTW-MC & $2.42${\scriptsize$\pm$0.05} & $4.80${\scriptsize$\pm$0.57} & $1.16${\scriptsize$\pm$0.02} \\
 \bottomrule
 \end{tabular}
 \caption{Loss $\sum_i \mathrm{DTW}(X, Y_i)$ of each barycenter (lower is better), mean $\pm$ standard deviation over ten draws of the series. Soft-DTW is taken at the temperature with the lowest mean DTW loss (Table~\ref{tab:averaging-gamma-sweep}).}
 \label{tab:averaging-losses}
\end{table}

%% file: sections/experiments/nn.tex
\paragraph{Experimental setup.}

We now evaluate the performance of the above discrepancies in terms of 1-nearest-neighbor accuracy on a subset of the UCR archive~\citep{UCRArchive}.
Of its $128$ univariate datasets, we keep $95$, each with its predefined train/test split: we exclude the $14$ with variable lengths or missing values and the ones with more than $2\times10^6$ (test, train) pairs.
For Soft-DTW, $\gamma$ is cross-validated on the training set and for the
two FastCDTW schemes, the learning rate of the warping optimization is cross-validated in the same way.

\paragraph{Results.}
Figure~\ref{fig:nn-cd} gives the average rank of each discrepancy over the
95 datasets. Soft-DTW ($2.47$), FastCDTW-PL ($2.54$) and FastCDTW-MC ($2.55$)
are not significantly different from one another, and all three are
significantly more accurate than DTW ($3.42$) and the Euclidean distance
($4.02$). Mean accuracies are $76.7\%$ for Soft-DTW, $76.3\%$ for both FastCDTW
schemes, $73.9\%$ for DTW and $70.3\%$ for the Euclidean distance, so the MC
estimate loses no accuracy compared with the exact PL integral.
Appendix~\ref{app:nn} gives the per-dataset accuracies.

\begin{figure}[t]
 \centering
 \includegraphics[width=\columnwidth]{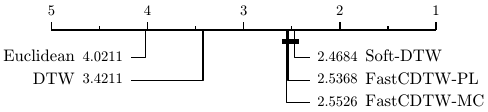}
 \caption{\ph{Average rank of each discrepancy by 1-NN test accuracy over the
 95 UCR datasets (1 is best). 
 A horizontal bar joins methods whose accuracies
 are not significantly different (pairwise two-sided Wilcoxon signed-rank tests
 with Holm's correction, $\alpha = 0.05$).}}
 \label{fig:nn-cd}
\end{figure}

%% file: sections/experiments/mad.tex
\paragraph{Experimental setup.}
Match-And-Deform~\citep{mad} (MAD) is an unsupervised domain-adaptation method for time series.
Given a labeled source domain and an unlabeled target domain, MAD jointly learns (i) a per-batch optimal-transport plan coupling source and target samples and (ii) a global temporal alignment between domains, by alternating an OT step with a DTW-based alignment step solved by dynamic programming, and then uses both the coupling and the alignment to adapt a classifier across the source and target domains.
This alternation is rerun to convergence at every loss evaluation, so the DTW step dominates the cost of the whole procedure. %

We test whether FastCDTW can replace this DTW step, removing the dynamic-programming bottleneck without lowering classification accuracy.

\paragraph{Datasets.} We reuse the two domain-adaptation benchmarks from the original MAD paper~\citep{mad}.
\emph{TarnBrittany} (TB) pairs satellite image time series of land-cover parcels from two French regions, giving two directions ($1\to2$ and $2\to1$).
\emph{miniTimeMatch} (mTM) is a reduced version of the TimeMatch crop-classification benchmark spanning three regions, yielding the four source/target pairs reported in~\citet{mad}.
For both datasets we use the same domain splits, class sets, and backbone classifier as~\citet{mad}, so that any difference in accuracy can be attributed to the choice of alignment method rather than to confounding factors in the pipeline. %

\paragraph{Results.} Table~\ref{tab:results} reports classification accuracy on the target domain, averaged over three seeds.
Substituting FastCDTW for DTW inside MAD matches or slightly improves accuracy on most source/target pairs, for both the PL and MC variants, and the two variants perform comparably to one another. 
Appendix~\ref{app:mad} further compares several parametrizations of the warping $\phi$ with the MC variant, among which the grid parametrization of Section~\ref{sec:param} yields the best downstream accuracy overall.

This substitution also changes MAD's computational profile. 
Using MAD, one postulates a global temporal shift between domains, hence in FastCDTW we train a single $\phi$ across samples, such that it can be treated as an ordinary model parameter and there is no need for a full gradient descent over $\phi$ parameters at each loss evaluation.
\llm{The bottom rows of Table~\ref{tab:results} report the resulting end-to-end speedup of MAD training over the DTW baseline, with the PL and MC variants respectively running $20\times$ and $50\times$ faster on GPU (Tesla V100).
These figures measure the whole training pipeline, and thus combine two effects: the lower cost of a single FastCDTW evaluation compared to DTW (cf Figure~\ref{fig:timings}) and the amortization of the optimization of $\phi$ across model updates described above.
Note that this amortization is not a change of training protocol that DTW could equally benefit from.
}

Overall, FastCDTW preserves the adaptation quality of MAD at a fraction of its computational cost.%

\begin{table}
  \centering
  \resizebox{\columnwidth}{!}{%
  \begin{tabular}{ccccc}
    \toprule
    Dataset & Pair & DTW & FastCDTW-MC & FastCDTW-PL \\
    \midrule
    \multirow{3}{*}{TB} & 1 $\to$ 2 & $98.6 \pm 0.3$ & $98.5 \pm 0.2$ & $\boldsymbol{98.8 \pm 0.3}$ \\
     & 2 $\to$ 1 & $89.5 \pm 0.2$ & $\boldsymbol{90.3 \pm 1.6}$ & $\boldsymbol{90.3 \pm 2.1}$ \\
    \cmidrule(lr){2-5}
     & \textit{Average} & $94.0 \pm 6.4$ & $94.4 \pm 5.8$ & $\boldsymbol{94.5 \pm 5.9}$ \\
    \midrule
    \multirow{5}{*}{mTM} & 1 $\to$ 2 & $71.2 \pm 3.5$ & $\boldsymbol{75.1 \pm 7.1}$ & $69.3 \pm 7.4$ \\
     & 1 $\to$ 3 & $72.9 \pm 15.2$ & $\boldsymbol{92.6 \pm 0.3}$ & $91.6 \pm 1.5$ \\
     & 3 $\to$ 1 & $\boldsymbol{87.3 \pm 1.1}$ & $84.0 \pm 1.4$ & $84.0 \pm 1.4$ \\
     & 3 $\to$ 2 & $84.0 \pm 1.5$ & $\boldsymbol{86.1 \pm 2.9}$ & $85.1 \pm 2.3$ \\
    \cmidrule(lr){2-5}
     & \textit{Average} & $78.9 \pm 8.0$ & $\boldsymbol{84.5 \pm 7.2}$ & $82.5 \pm 9.4$ \\
    \midrule
    \midrule
    \multirow{2}{*}{\shortstack{Speedup\\ vs.\ DTW}} & CPU & - & $3.54\times$ & $0.32\times$ \\
                                              & GPU & - & $53.93\times$ & $20.25\times$ \\
    \bottomrule
  \end{tabular}%
  }
  \caption{Domain-adaptation accuracy (mean $\pm$ std) for MAD instantiated with DTW, FastCDTW-MC, or FastCDTW-PL as the alignment step.
  }
  \label{tab:results}
\end{table}

%% file: sections/conclusion.tex
We introduced FastCDTW, a solver for Continuous Dynamic Time Warping that optimizes a parametrized warping by gradient descent. %
The alignment cost is evaluated either exactly, in closed form, or by an unbiased Monte Carlo estimator, and both schemes are differentiable with respect to the warping and to the input curves, and amenable to batched GPU computation. \nc{As such, it unlocks potential use of CDTW in machine learning pipelines, where it can be used as a loss function or as a component of a larger differentiable model.}
On the theoretical side, we showed that Lipschitz warpings reach the same optimal cost as smooth ones, and that CDTW restricted to a parametrized family of warpings is differentiable with respect to an input curve.
Empirically, FastCDTW serves as a drop-in replacement for DTW or Soft-DTW in averaging, classification and domain adaptation, at a lower computational cost. \nc{On the downside, our gradient-based solver does not guarantee global optimality because of the non-convexity of the objective. Yet, in practice, we demonstrated that it is accurate enough to be useful in machine learning pipelines.}

Because FastCDTW compares curves rather than sequences of samples, and its Monte Carlo scheme only requires evaluating them at arbitrary times, it is naturally suited to continuous-time models of time series.
A promising direction for future work is thus to use it as a training loss for continuous-time foundation models (e.g. building on implicit neural representations of time series~\citep{naour2024time}) for forecasting on irregular grids and imputation tasks. %

%% file: sections/acknowledgements.tex
This work was supported by the French National Research Agency (ANR) under the projects CALME (ANR-25-CE23-4419), OTTOPIA (ANR-20-CHIA-0030) and DATeS (ANR-25-CE23-6035), and by the ACORD chair of the SequoIA AI cluster (ANR-23-IACL-0009). CALME and OTTOPIA funded Pierre Houédry, and ACORD funded Romain Tavenard. The authors declare no competing financial interests.

%% file: sections/appendix/missingproofs.tex
\subsection{Proof of Proposition~\ref{prop:reparametrization}}

\reparametrization*

\begin{proof}
Since \(\psi\) is a strictly increasing \(C^1\)-diffeomorphism on \([0,1]\), it admits a well-defined inverse \(\psi^{-1}\) and \(\dot\psi>0\) everywhere.  Introduce the change of variables
\[
u = \psi(t),
\qquad
t = \psi^{-1}(u).
\]
Define the new reparametrization
\[
\tilde\phi(u)
:=\phi\bigl(\psi^{-1}(u)\bigr)
=(\phi\circ\psi^{-1})(u).
\]
Then
\begin{align*}
\mathcal{L}(\phi,\psi)
&=\int_{0}^{1}\bigl\lVert x(\phi(t))-y(\psi(t))\bigr\rVert_2^2
\sqrt{\dot\phi(t)^{2}+\dot\psi(t)^{2}}
\;\mathrm{d}t\\
&=\int_{0}^{1}
\Bigl\lVert x\bigl(\tilde\phi(u)\bigr)-y(u)\Bigr\rVert_2^2
\sqrt{\dot\phi\bigl(\psi^{-1}(u)\bigr)^{2}+\dot\psi\bigl(\psi^{-1}(u)\bigr)^{2}}
\;\frac{\mathrm{d}u}{\dot\psi(\psi^{-1}(u))}.
\end{align*}
Since
\[
\dot{\tilde\phi}(u)
=\frac{\mathrm{d}}{\mathrm{d}u}\,\phi\bigl(\psi^{-1}(u)\bigr)
=\frac{\dot\phi(\psi^{-1}(u))}{\dot\psi(\psi^{-1}(u))},
\]
we have
\[
\sqrt{\dot\phi(\psi^{-1}(u))^{2}+\dot\psi(\psi^{-1}(u))^{2}}
\;\frac{1}{\dot\psi(\psi^{-1}(u))}
=\sqrt{\dot{\tilde\phi}(u)^{2}+1}.
\]
Combining these identities yields
\[
\mathcal{L}(\phi,\psi)
=\int_{0}^{1}\bigl\lVert x(\tilde\phi(u))-y(u)\bigr\rVert_2^2
\sqrt{\dot{\tilde\phi}(u)^{2}+1}
\;\mathrm{d}u
=\mathcal{L}\bigl(\tilde\phi,\mathrm{Id}\bigr),
\]
as required.
\end{proof}

\subsection{Proof of Proposition~\ref{prop:grid-approx}}\label{app:grid-approx}

\gridapprox*

Recall that $\mathcal C$ is the set of increasing $C^1$-diffeomorphisms $\phi:[0,1]\to[0,1]$ with $\phi(0)=0$ and $\phi(1)=1$ (Section~\ref{sec:background}).
Throughout, $R:=\max_{u}\lVert x(u)\rVert_2+\max_{t}\lVert y(t)\rVert_2$ which is finite since $x$ and $y$ are continuous, and $\omega_x(s):=\sup_{\lvert u-v\rvert\leq s}\lVert x(u)-x(v)\rVert_2$ is the modulus of continuity of $x$: it is nondecreasing, and $\omega_x(s)\to0$ as $s\to0$ since $x$ is uniformly continuous on $[0,1]$. Every $\phi\in\mathcal C_{\mathrm{Lip}}$ is differentiable almost everywhere with $\dot\phi\in L^\infty([0,1])$, so $\mathcal L(\phi)$ is well defined on $\mathcal C_{\mathrm{Lip}}$, and $\mathcal C\subset\mathcal C_{\mathrm{Lip}}$.

\begin{lemma}[continuity of $\mathcal L$]\label{lem:app-L-continuity}
Let $\phi,\phi_n\in\mathcal C_{\mathrm{Lip}}$ with $\sup_n\lVert\dot\phi_n\rVert_\infty\leq M$ and $\varepsilon_n:=\lVert\dot\phi_n-\dot\phi\rVert_{L^1}\to0$. Then $\mathcal L(\phi_n)\to\mathcal L(\phi)$.
\end{lemma}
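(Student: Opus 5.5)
The plan is to write $\mathcal L(\phi)=\int_0^1 f(\phi(t),t)\,g(\dot\phi(t))\,\mathrm dt$, with $f(u,t):=\lVert x(u)-y(t)\rVert_2^2$ and $g(s):=\sqrt{s^2+1}$, and to split the difference into two terms:
\[
\mathcal L(\phi_n)-\mathcal L(\phi)=\int_0^1\bigl(f(\phi_n,t)-f(\phi,t)\bigr)g(\dot\phi_n)\,\mathrm dt+\int_0^1 f(\phi,t)\bigl(g(\dot\phi_n)-g(\dot\phi)\bigr)\,\mathrm dt .
\]
Each term will then be controlled separately using the constants $R$ and $\omega_x$ introduced just above.

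First, I need uniform convergence of the warpings themselves. Since every map in $\mathcal C_{\mathrm{Lip}}$ is Lipschitz, it is absolutely continuous, and $\phi_n(0)=\phi(0)=0$. Hence for every $t$,
\[
\lvert\phi_n(t)-\phi(t)\rvert=\Bigl\lvert\int_0^t(\dot\phi_n-\dot\phi)\Bigr\rvert\leq\varepsilon_n .
\]
So $\lVert\phi_n-\phi\rVert_\infty\leq\varepsilon_n\to0$.

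Next, I bound the ingredients pointwise.
\begin{itemize}
\item By the identity $\lVert a\rVert^2-\lVert b\rVert^2=\langle a-b,a+b\rangle$ and the definition of $R$, $\lvert f(u,t)-f(v,t)\rvert\leq 2R\,\lVert x(u)-x(v)\rVert_2\leq 2R\,\omega_x(\lvert u-v\rvert)$.
\item Also $0\leq f\leq R^2$.
\item The function $g$ is $1$-Lipschitz, and $g(\dot\phi_n)\leq\sqrt{M^2+1}$.
\end{itemize}

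Combining these, the first term is at most $2R\sqrt{M^2+1}\,\omega_x(\varepsilon_n)$, using that $\omega_x$ is nondecreasing. The second term is at most $R^2\int\lvert\dot\phi_n-\dot\phi\rvert=R^2\varepsilon_n$. Both bounds tend to zero, which proves the claim. Note that $\mathcal L(\phi)$ is finite, because $\dot\phi\in L^\infty$ by the Lipschitz assumption.

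The only point needing care is measurability and integrability in the $L^\infty$ setting. The derivatives exist only almost everywhere, but every bound above holds almost everywhere, and the integrands are products of a continuous function composed with a measurable one. I therefore expect no real obstacle. The main subtlety is the absolute-continuity step that turns $L^1$ convergence of $\dot\phi_n$ into uniform convergence of $\phi_n$, and this is exactly where the Lipschitz (hence absolutely continuous) assumption is used. The uniform bound $M$ is needed only to control $g(\dot\phi_n)$ in the first term.
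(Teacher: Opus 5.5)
Your proposal is correct and follows the paper's proof essentially verbatim. It uses the same two-term splitting, the uniform bound $\lVert\phi_n-\phi\rVert_\infty\leq\varepsilon_n$ from $\phi_n(0)=\phi(0)=0$, and the same final estimate $2R\sqrt{M^2+1}\,\omega_x(\varepsilon_n)+R^2\varepsilon_n\to0$.
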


\begin{proof}
Since $\phi_n(0)=\phi(0)=0$, $\lvert\phi_n(t)-\phi(t)\rvert=\bigl\lvert\int_0^t\bigl(\dot\phi_n(s)-\dot\phi(s)\bigr)\,\mathrm ds\bigr\rvert\leq\varepsilon_n$. Let $r_n:=x\circ\phi_n-y$, $r:=x\circ\phi-y$, $w_n:=\sqrt{\dot\phi_n^2+1}$ and $w:=\sqrt{\dot\phi^2+1}$, so that
\[
\mathcal L(\phi_n)-\mathcal L(\phi)=\int_0^1\bigl(\lVert r_n\rVert_2^2-\lVert r\rVert_2^2\bigr)w_n\,\mathrm dt+\int_0^1\lVert r\rVert_2^2\,(w_n-w)\,\mathrm dt.
\]
Recall that for $a,b\in\mathbb R^d$, $\lVert a\rVert_2^2-\lVert b\rVert_2^2=\langle a-b,a+b\rangle$. By the Cauchy-Schwarz and triangle inequalities, we have
\[
\bigl\lvert\lVert a\rVert_2^2-\lVert b\rVert_2^2\bigr\rvert\leq\lVert a-b\rVert_2\bigl(\lVert a\rVert_2+\lVert b\rVert_2\bigr).
\]
We apply this with $a=r_n(t)$ and $b=r(t)$. First, $\lVert r_n(t)\rVert_2\leq\lVert x(\phi_n(t))\rVert_2+\lVert y(t)\rVert_2\leq R$, and likewise $\lVert r(t)\rVert_2\leq R$. Second, the $y(t)$ terms cancel, so $r_n(t)-r(t)=x(\phi_n(t))-x(\phi(t))$. Since $\lvert\phi_n(t)-\phi(t)\rvert\leq\varepsilon_n$ and $\omega_x$ is nondecreasing, $\lVert r_n(t)-r(t)\rVert_2\leq\omega_x(\varepsilon_n)$. Hence, for every $t$,
\[
\bigl\lvert\lVert r_n(t)\rVert_2^2-\lVert r(t)\rVert_2^2\bigr\rvert\leq2R\,\omega_x(\varepsilon_n).
\]
Moreover $w_n\leq\sqrt{M^2+1}$, and $\lvert w_n-w\rvert\leq\lvert\dot\phi_n-\dot\phi\rvert$ since $a\mapsto\sqrt{a^2+1}$ is $1$-Lipschitz. Hence
\[
\lvert\mathcal L(\phi_n)-\mathcal L(\phi)\rvert\leq2R\sqrt{M^2+1}\;\omega_x(\varepsilon_n)+R^2\varepsilon_n\longrightarrow0 .\qedhere
\]
\end{proof}

\begin{proof}[Proof of Proposition~\ref{prop:grid-approx}]
Since $\mathcal C\subset\mathcal C_{\mathrm{Lip}}$, $\inf_{\mathcal C_{\mathrm{Lip}}}\mathcal L\leq\inf_{\mathcal C}\mathcal L$. Conversely, let $\phi\in\mathcal C_{\mathrm{Lip}}$ and $M:=\lVert\dot\phi\rVert_\infty$. Then $\dot\phi\in[0,M]$ almost everywhere and $\int_0^1\dot\phi(t)\,\mathrm dt=\phi(1)-\phi(0)=1$. Continuous functions are dense in $L^1([0,1])$, and the projection $a\mapsto\min(\max(a,0),M)$ onto $[0,M]$ is $1$-Lipschitz and leaves $\dot\phi$ unchanged, so there are continuous $g_n$ with values in $[0,M]$ such that $\lVert g_n-\dot\phi\rVert_{L^1}\to0$. In particular $c_n:=\int_0^1g_n(t)\,\mathrm dt\to1$. For $n$ large enough that $c_n>1/2$, set
\[
\tilde g_n:=\Bigl(1-\frac1n\Bigr)\frac{g_n}{c_n}+\frac1n,\qquad\phi_n(t):=\int_0^t\tilde g_n(s)\,\mathrm ds .
\]
Then $\tilde g_n$ is continuous, $\tilde g_n\geq1/n>0$ and $\int_0^1\tilde g_n(t)\,\mathrm dt=1$, so $\phi_n$ is $C^1$ with positive derivative, $\phi_n(0)=0$ and $\phi_n(1)=1$, that is $\phi_n\in\mathcal C$. Hence,
\[
\lVert\tilde g_n-\dot\phi\rVert_{L^1}\leq\Bigl\lVert\frac{g_n}{c_n}-\dot\phi\Bigr\rVert_{L^1}+\frac2n\leq\frac{\lVert g_n-\dot\phi\rVert_{L^1}}{c_n}+\Bigl\lvert\frac1{c_n}-1\Bigr\rvert+\frac2n\longrightarrow0 .
\]
By Lemma~\ref{lem:app-L-continuity}, $\mathcal L(\phi_n)\to\mathcal L(\phi)$, so $\inf_{\mathcal C}\mathcal L\leq\mathcal L(\phi)$. Since $\phi\in\mathcal C_{\mathrm{Lip}}$ is arbitrary, $\inf_{\mathcal C}\mathcal L\leq\inf_{\mathcal C_{\mathrm{Lip}}}\mathcal L$.
\end{proof}

\subsection{Proof of Proposition~\ref{prop:danskin}}\label{app:danskin}

\danskin*

For $\theta\in\Theta$ and $y\in\mathcal Y$ we write
\[
w_\theta:=\sqrt{\dot\phi_\theta^{\,2}+1},\quad
r_{\theta,y}:=x\circ\phi_\theta-y,\quad
V(y):=\mathrm{CDTW}_\Theta(x,y)=\inf_{\theta\in\Theta}\mathcal L_y(\phi_\theta),
\]
and $\lVert x\rVert_\infty:=\max_{u\in[0,1]}\lVert x(u)\rVert_2$, finite by (A1). By (A2) and (A4), $\theta\mapsto\lVert\dot\phi_\theta\rVert_\infty$ is continuous on a compact set, so $B:=\max_{\theta\in\Theta}\lVert\dot\phi_\theta\rVert_\infty<\infty$ and we set $\bar w:=\sqrt{B^2+1}$. This uniform slope bound is what allows $y$ and $h$ to be taken in $L^2$. Since $\phi_\theta$ takes values in $[0,1]$,
\begin{equation}\label{eq:app-w-bound}
1\leq w_\theta\leq\bar w\ \text{ a.e.},\qquad \lVert r_{\theta,y}\rVert_{L^2}\leq\lVert x\rVert_\infty+\lVert y\rVert_{L^2} .
\end{equation}

\begin{lemma}[$\mathcal L_y(\phi)$ is quadratic in $y$]\label{lem:app-quadratic}
Let $\phi:[0,1]\to[0,1]$ be Lipschitz, $w:=\sqrt{\dot\phi^2+1}$ and $r:=x\circ\phi-y$. For every $y,h\in\mathcal Y$ and $\tau\in\mathbb R$,
\[
\mathcal L_{y+\tau h}(\phi)=\mathcal L_y(\phi)+\tau\,\partial_h\mathcal L_y(\phi)+\tau^2\int_0^1w\lVert h\rVert_2^2\,\mathrm dt ,
\]
with $\partial_h\mathcal L_y(\phi)$ given by \eqref{eq:dir-deriv-explicit}, and
\[
\bigl\lvert\partial_h\mathcal L_y(\phi)\bigr\rvert\leq2\lVert w\rVert_\infty\lVert r\rVert_{L^2}\,\lVert h\rVert_{L^2},\qquad
0\le\int_0^1w\lVert h\rVert_2^2\,\mathrm dt\leq\lVert w\rVert_\infty\lVert h\rVert_{L^2}^2 .
\]
In particular the limit defining $\partial_h\mathcal L_y(\phi)$ exists and is two-sided, and $h\mapsto\partial_h\mathcal L_y(\phi)=\langle-2w\,r,h\rangle_{L^2}$ is linear and continuous on $\mathcal Y$.
\end{lemma}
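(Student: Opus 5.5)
The plan is to expand the square pointwise and then integrate. Fix $t$ where $\dot\phi(t)$ exists; this holds almost everywhere, since $\phi$ is Lipschitz. Writing $r=x\circ\phi-y$, we have $x(\phi(t))-(y+\tau h)(t)=r(t)-\tau h(t)$, so
\[
\lVert r-\tau h\rVert_2^2=\lVert r\rVert_2^2-2\tau\langle r,h\rangle+\tau^2\lVert h\rVert_2^2 .
\]
Multiplying by $w$ and integrating over $[0,1]$ gives the three terms of the claimed identity: $\mathcal L_y(\phi)$, the linear term $-2\tau\int_0^1 w\langle x\circ\phi-y,h\rangle\,\mathrm dt$, and $\tau^2\int_0^1 w\lVert h\rVert_2^2\,\mathrm dt$. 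The linear coefficient matches \eqref{eq:dir-deriv-explicit}, because $w=\sqrt{\dot\phi^2+1}$ and the cost is taken with $\psi=\mathrm{Id}$.

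Before splitting the integral, we have to check that each of the three pieces is finite, so that no $\infty-\infty$ issue arises. Since $\phi$ is Lipschitz, $\lVert w\rVert_\infty\le\sqrt{\mathrm{Lip}(\phi)^2+1}<\infty$. The function $x\circ\phi$ is continuous and bounded by $\lVert x\rVert_\infty$, so $r\in L^2$ whenever $y\in L^2$.

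The bounds then follow directly.
\begin{itemize}
\item For the linear term, pull out $\lVert w\rVert_\infty$ and apply Cauchy--Schwarz, first pointwise in $\mathbb R^d$ and then in $L^2$. This gives $\lvert\partial_h\mathcal L_y(\phi)\rvert\le 2\lVert w\rVert_\infty\lVert r\rVert_{L^2}\lVert h\rVert_{L^2}$.
\item For the quadratic term, the integrand $w\lVert h\rVert_2^2$ is nonnegative and bounded by $\lVert w\rVert_\infty\lVert h\rVert_2^2$. Integrating gives $0\le\int_0^1 w\lVert h\rVert_2^2\,\mathrm dt\le\lVert w\rVert_\infty\lVert h\rVert_{L^2}^2$.
\end{itemize}

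To get the "in particular" part, subtract $\mathcal L_y(\phi)$ from the identity and divide by $\tau\neq0$:
\[
\frac{\mathcal L_{y+\tau h}(\phi)-\mathcal L_y(\phi)}{\tau}=\partial_h\mathcal L_y(\phi)+\tau\int_0^1 w\lVert h\rVert_2^2\,\mathrm dt .
\]
The remainder is $O(\lvert\tau\rvert)$ by the quadratic bound, so the difference quotient converges to $\partial_h\mathcal L_y(\phi)$ as $\tau\to0$ from either side, and the limit is two-sided.

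Finally, we have $\partial_h\mathcal L_y(\phi)=\langle -2wr,h\rangle_{L^2}$ with $wr\in L^2$, since $w$ is bounded and $r\in L^2$. This expression is visibly linear in $h$, and the Cauchy--Schwarz bound above gives continuity.

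There is no real obstacle in this argument. The only point needing care is the integrability justification: $L^\infty$ weights multiplying $L^2$ functions, so that every integral is finite before the terms are separated.
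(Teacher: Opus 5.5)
Your proposal is correct and follows the paper's own proof. Both arguments check finiteness from the boundedness of $w$ and the continuity of $x\circ\phi$, expand $\lVert r-\tau h\rVert_2^2$ pointwise, multiply by $w$ and integrate, and obtain the bounds from $0\le w\le\lVert w\rVert_\infty$ together with Cauchy--Schwarz. You also spell out the difference-quotient step for the two-sided limit, which the paper leaves implicit.
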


\begin{proof}
All integrals are finite, since $w$ is bounded, $x\circ\phi$ is continuous, and $y,h\in\mathcal Y$. Expand $\lVert r-\tau h\rVert_2^2=\lVert r\rVert_2^2-2\tau\langle r,h\rangle+\tau^2\lVert h\rVert_2^2$ pointwise, multiply by $w$ and integrate. The bounds follow from $0\le w\le\lVert w\rVert_\infty$ and the Cauchy-Schwarz inequality in $L^2$.
\end{proof}

\begin{lemma}[continuity in $\theta$]\label{lem:app-continuity}
Under (A1)-(A4), $(\theta,y)\mapsto\mathcal L_y(\phi_\theta)$ is continuous on $\Theta\times\mathcal Y$, and for fixed $y,h\in\mathcal Y$ the map $\theta\mapsto\partial_h\mathcal L_y(\phi_\theta)$ is continuous on $\Theta$.
\end{lemma}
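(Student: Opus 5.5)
The plan is to prove joint continuity directly, splitting the difference into three pieces and bounding each with the uniform quantities $B$, $\bar w$ and the bound \eqref{eq:app-w-bound}. Fix $(\theta,y)$ and let $(\theta_n,y_n)\to(\theta,y)$ in $\Theta\times\mathcal Y$. Write $w_n:=w_{\theta_n}$, $r_n:=r_{\theta_n,y_n}$, $w:=w_\theta$ and $r:=r_{\theta,y}$. Then
\[
\mathcal L_{y_n}(\phi_{\theta_n})-\mathcal L_y(\phi_\theta)=\int_0^1\bigl(\lVert r_n\rVert_2^2-\lVert r\rVert_2^2\bigr)w_n\,\mathrm dt+\int_0^1\lVert r\rVert_2^2\,(w_n-w)\,\mathrm dt .
\]

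\textbf{Second term.} Since $a\mapsto\sqrt{a^2+1}$ is $1$-Lipschitz, $\lVert w_n-w\rVert_\infty\le\lVert\dot\phi_{\theta_n}-\dot\phi_\theta\rVert_\infty\to0$ by (A4). So this term is bounded by $\lVert r\rVert_{L^2}^2\lVert w_n-w\rVert_\infty\to0$.

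\textbf{First term.} Use $\bigl\lvert\lVert a\rVert^2-\lVert b\rVert^2\bigr\rvert\le\lVert a-b\rVert(\lVert a\rVert+\lVert b\rVert)$, then Cauchy--Schwarz in $L^2$ and $w_n\le\bar w$. This gives the bound
\[
\bar w\,\lVert r_n-r\rVert_{L^2}\bigl(\lVert r_n\rVert_{L^2}+\lVert r\rVert_{L^2}\bigr).
\]
The factor $\lVert r_n\rVert_{L^2}+\lVert r\rVert_{L^2}$ stays bounded by \eqref{eq:app-w-bound}, because $\lVert y_n\rVert_{L^2}$ is bounded. It remains to show $\lVert r_n-r\rVert_{L^2}\to0$. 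We have
\[
r_n-r=\bigl(x\circ\phi_{\theta_n}-x\circ\phi_\theta\bigr)-(y_n-y).
\]
The part $y_n-y$ tends to $0$ in $L^2$ by assumption. For the other part, $\phi_{\theta_n}(0)=\phi_\theta(0)=0$ gives
\[
\lvert\phi_{\theta_n}(t)-\phi_\theta(t)\rvert\le\lVert\dot\phi_{\theta_n}-\dot\phi_\theta\rVert_\infty=:\varepsilon_n\to0
\]
uniformly in $t$. Hence $\lVert x\circ\phi_{\theta_n}-x\circ\phi_\theta\rVert_\infty\le\omega_x(\varepsilon_n)\to0$, by uniform continuity of $x$ (A1), with the same modulus as in the proof of Lemma~\ref{lem:app-L-continuity}. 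The sup norm dominates the $L^2$ norm on $[0,1]$, so this part also tends to $0$ in $L^2$.

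\textbf{Continuity of $\theta\mapsto\partial_h\mathcal L_y(\phi_\theta)$.} By Lemma~\ref{lem:app-quadratic}, $\partial_h\mathcal L_y(\phi_\theta)=\langle-2w_\theta r_{\theta,y},h\rangle_{L^2}$, so it suffices to show $w_\theta r_{\theta,y}\to w_{\theta'}r_{\theta',y}$ in $L^2$ as $\theta\to\theta'$. Split
\[
w_\theta r_\theta-w_{\theta'}r_{\theta'}=(w_\theta-w_{\theta'})\,r_\theta+w_{\theta'}\,(r_\theta-r_{\theta'}).
\]
The first piece is bounded in $L^2$ by $\lVert w_\theta-w_{\theta'}\rVert_\infty\lVert r_\theta\rVert_{L^2}$, and the second by $\bar w\,\omega_x(\varepsilon)$. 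Both vanish by the estimates above with $y_n=y$.

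The only delicate point is that $y$ and $h$ lie merely in $L^2$, so no pointwise bound on $y$ is available. Every estimate must therefore put $y$-dependence in $L^2$ norms and the $\theta$-dependence in $L^\infty$ norms, namely $w$ and $x\circ\phi_\theta$. This is exactly what the uniform slope bound $\bar w$, obtained from compactness and (A4), makes possible. I expect no real obstacle beyond keeping this bookkeeping straight.
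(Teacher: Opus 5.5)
Your proposal is correct and follows essentially the paper's own argument: the same two-term splitting as in Lemma~\ref{lem:app-L-continuity}, with Cauchy--Schwarz in $L^2$ and \eqref{eq:app-w-bound} handling the $y$-dependence, and the sup-norm distance $\varepsilon=\lVert\dot\phi_\theta-\dot\phi_{\theta'}\rVert_\infty$ from (A4) handling the $\theta$-dependence, which gives $\lVert r_{\theta,y}-r_{\theta',y'}\rVert_{L^2}\le\omega_x(\varepsilon)+\lVert y-y'\rVert_{L^2}$, followed by the analogous splitting for $\partial_h\mathcal L_y(\phi_\theta)$. The only difference is cosmetic: you place the fixed residual $r$ in the $(w_n-w)$ term, whereas the paper uses the moving residual $r_{\theta',y'}$ there, which slightly simplifies your boundedness bookkeeping.
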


\begin{proof}
We follow the proof of Lemma~\ref{lem:app-L-continuity}, with two changes. First, $y$ is only in $L^2$, so the pointwise bound by $R$ is replaced by the Cauchy-Schwarz inequality in $L^2$ together with \eqref{eq:app-w-bound}. Second, $\varepsilon:=\lVert\dot\phi_\theta-\dot\phi_{\theta'}\rVert_\infty$ is a sup-norm distance, which tends to $0$ by (A4), so that $\lVert\phi_\theta-\phi_{\theta'}\rVert_\infty\leq\varepsilon$ and $\lvert w_\theta-w_{\theta'}\rvert\leq\varepsilon$ almost everywhere. This gives $\lVert r_{\theta,y}-r_{\theta',y'}\rVert_{L^2}\leq\omega_x(\varepsilon)+\lVert y-y'\rVert_{L^2}$ and
\[
\lvert \mathcal L_y(\phi_\theta)-\mathcal L_{y'}(\phi_{\theta'})\rvert\leq\bar w\bigl(\lVert r_{\theta,y}\rVert_{L^2}+\lVert r_{\theta',y'}\rVert_{L^2}\bigr)\bigl(\omega_x(\varepsilon)+\lVert y-y'\rVert_{L^2}\bigr)+\varepsilon\lVert r_{\theta',y'}\rVert_{L^2}^2 ,
\]
which tends to $0$ as $(\theta',y')\to(\theta,y)$, since $\lVert r_{\theta',y'}\rVert_{L^2}\le\lVert x\rVert_\infty+\lVert y'\rVert_{L^2}$ stays bounded. The same splitting applied to \eqref{eq:dir-deriv-explicit} gives, for fixed $y$ and $h$,
\[
\bigl\lvert\partial_h\mathcal L_y(\phi_\theta)-\partial_h\mathcal L_y(\phi_{\theta'})\bigr\rvert\leq2\lVert h\rVert_{L^2}\bigl(\bar w\,\omega_x(\varepsilon)+\varepsilon\lVert r_{\theta',y}\rVert_{L^2}\bigr)\longrightarrow0 .
\]
\end{proof}

\begin{proof}[Proof of Proposition~\ref{prop:danskin}]
\emph{(i) Existence and compactness.} By Lemma~\ref{lem:app-continuity}, $\theta\mapsto\mathcal L_y(\phi_\theta)$ is continuous on $\Theta$, which is compact by (A2). By the extreme value theorem, it attains its infimum: there is $\theta^\star\in\Theta$ with $\mathcal L_y(\phi_{\theta^\star})=V(y)$, so $\Phi(y)\neq\emptyset$ and the infimum defining $\mathrm{CDTW}_\Theta(x,y)$ is a minimum. Moreover, $\Phi(y)=\{\theta\in\Theta:\mathcal L_y(\phi_\theta)=V(y)\}$ is closed in $\Theta$ as the preimage of the closed set $\{V(y)\}$ under a continuous map, hence compact.

\emph{(ii) Upper bound.} Let $\theta^\star\in\Phi(y)$ and $\tau>0$. By Lemma~\ref{lem:app-quadratic},
\[
V(y+\tau h)\leq \mathcal L_{y+\tau h}(\phi_{\theta^\star})=V(y)+\tau\,\partial_h\mathcal L_y(\phi_{\theta^\star})+\tau^2\int_0^1w_{\theta^\star}\lVert h\rVert_2^2\,\mathrm dt ,
\]
and the last term is at most $\tau^2\bar w\lVert h\rVert_{L^2}^2$, so $\limsup_{\tau\to0^+}\bigl(V(y+\tau h)-V(y)\bigr)/\tau\leq\partial_h\mathcal L_y(\phi_{\theta^\star})$. The right-hand side is continuous in $\theta^\star$ (Lemma~\ref{lem:app-continuity}) on the compact set $\Phi(y)$, so its minimum over $\Phi(y)$ is attained, and
\[
\limsup_{\tau\to0^+}\frac{V(y+\tau h)-V(y)}{\tau}\leq\min_{\theta\in\Phi(y)}\partial_h\mathcal L_y(\phi_\theta).
\]

\emph{(ii) Lower bound.} Let $\tau_n\to0^+$ be a sequence along which $\bigl(V(y+\tau_nh)-V(y)\bigr)/\tau_n$ converges to the $\liminf$, in $[-\infty,+\infty]$. Pick $\theta_n\in\Phi(y+\tau_nh)$, which is possible by (i). By Lemma~\ref{lem:app-quadratic}, whose quadratic term is nonnegative,
\[
V(y+\tau_nh)=\mathcal L_{y+\tau_nh}(\phi_{\theta_n})\geq \mathcal L_y(\phi_{\theta_n})+\tau_n\,\partial_h\mathcal L_y(\phi_{\theta_n})\geq V(y)+\tau_n\,\partial_h\mathcal L_y(\phi_{\theta_n}).
\]
By compactness of $\Theta$, a subsequence of $(\theta_n)$ converges to some $\bar\theta\in\Theta$. We re-index $(\tau_n)$ and $(\theta_n)$ along it, which does not change the limit of the difference quotients, so that $\theta_n\to\bar\theta$.
For $\theta^\star\in\Phi(y)$, $\mathcal L_{y+\tau_nh}(\phi_{\theta_n})=V(y+\tau_nh)\leq \mathcal L_{y+\tau_nh}(\phi_{\theta^\star})$, so by Lemma~\ref{lem:app-continuity}, $\mathcal L_y(\phi_{\bar\theta})=\lim_n\mathcal L_{y+\tau_nh}(\phi_{\theta_n})\leq \mathcal L_y(\phi_{\theta^\star})=V(y)$, that is $\bar\theta\in\Phi(y)$. Dividing by $\tau_n$ and letting $n\to\infty$, Lemma~\ref{lem:app-continuity} gives
\[
\liminf_{\tau\to0^+}\frac{V(y+\tau h)-V(y)}{\tau}\geq\lim_n\partial_h\mathcal L_y(\phi_{\theta_n})=\partial_h\mathcal L_y(\phi_{\bar\theta})\geq\min_{\theta\in\Phi(y)}\partial_h\mathcal L_y(\phi_\theta).
\]
The combination of the two bounds gives \eqref{eq:fmtw-gradient-y}.
\end{proof}

\subsection{Proof of Proposition~\ref{prop:pl-closed-form}}\label{app:pl-closed-form}

\plclosedform*

\begin{proof}
\emph{Step 1: the sum decomposition.} The map $y_\mathrm{PL}$ changes affine pieces only at its own breakpoints $\{t_{y,j}\}$, and $\phi_{x,\mathrm{PL}}$ only at the grid points $\{k/N_\phi\}$. Consequently $x_\mathrm{PL}\circ\phi_{x,\mathrm{PL}}$ changes affine pieces only where $\phi_{x,\mathrm{PL}}$ does, or where $\phi_{x,\mathrm{PL}}(t)$ crosses a breakpoint of $x_\mathrm{PL}$, that is at the preimages $\{\phi_{x,\mathrm{PL}}^{-1}(t_{x,i})\}$. All three sets belong to $\{\tau_k\}_{k=0}^M$ by assumption, so the three maps are affine on every interval $[\tau_k,\tau_{k+1}]$, which gives the decomposition
\begin{equation}\label{eq:app-split}
\mathcal{L}(\phi_{x,\mathrm{PL}})=\sum_{k=0}^{M-1}\int_{\tau_k}^{\tau_{k+1}}
\bigl\lVert d(t)\bigr\rVert_2^2\,
\sqrt{\dot\phi_{x,\mathrm{PL}}(t)^{2}+1}\;\mathrm dt,
\end{equation}
where $d(t)=x_\mathrm{PL}(\phi_{x,\mathrm{PL}}(t))-y_\mathrm{PL}(t)$ denotes the residual.

\emph{Step 2: one interval.} Fix $k$ and work on $[\tau_k,\tau_{k+1}]$. There $\dot\phi_{x,\mathrm{PL}}$ is the constant $\Delta\phi_k/\Delta\tau_k$, so the speed factor is constant and equal to
\[
\sqrt{\dot\phi_{x,\mathrm{PL}}^{2}+1}
=\frac{\sqrt{\Delta\tau_k^{2}+\Delta\phi_k^{2}}}{\Delta\tau_k}.
\]
Moreover $d$ is a difference of two affine maps of $t$, hence affine, hence equal to the linear interpolation of its endpoint values $d_k$ and $d_{k+1}$. The $k$-th term of \eqref{eq:app-split} is therefore
\[
\frac{\sqrt{\Delta\tau_k^{2}+\Delta\phi_k^{2}}}{\Delta\tau_k}
\int_{\tau_k}^{\tau_{k+1}}\bigl\lVert d_k+\tfrac{t-\tau_k}{\Delta\tau_k}(d_{k+1}-d_k)\bigr\rVert_2^2\,\mathrm dt.
\]

\emph{Step 3: integral.} The change of variables $s=(t-\tau_k)/\Delta\tau_k$ turns it into $\Delta\tau_k\int_0^1\lVert d_k+s(d_{k+1}-d_k)\rVert_2^2\,\mathrm ds$, and expanding the squared norm gives, for any two vectors $a,b\in\mathbb R^d$,
\[
\int_0^1\bigl\lVert a+s(b-a)\bigr\rVert_2^2\,\mathrm ds
=\frac{\lVert a\rVert_2^2+\langle a,b\rangle+\lVert b\rVert_2^2}{3},
\]
since $\int_0^1(a_i+s(b_i-a_i))^2\,\mathrm ds=(a_i^2+a_ib_i+b_i^2)/3$ in each coordinate. The $k$-th term of \eqref{eq:app-split} is thus
\[
\frac{\sqrt{\Delta\tau_k^{2}+\Delta\phi_k^{2}}}{3}
\Bigl(\lVert d_k\rVert_2^2+\langle d_k,d_{k+1}\rangle+\lVert d_{k+1}\rVert_2^2\Bigr),
\]
the factors $\Delta\tau_k$ cancelling. Summing over $k$ gives \eqref{eq:pl-closed-form}.
\end{proof}
If $\phi_{x,\mathrm{PL}}$ has flat pieces, the proposition still holds with $\phi_{x,\mathrm{PL}}^{-1}(t_{x,i})$ taken as any point of its preimage, since the endpoints of a flat piece are grid points, which already belong to $\{\tau_k\}$.

%% file: sections/appendix/complexity.tex
\llm{The complexity of the backward pass of FastCDTW is the same as that of the forward pass, up to a constant factor, for both the PL and MC schemes.
For the MC scheme, this is straightforward: the Monte Carlo samples are drawn independently of $\theta$, so the loss is an average of $S$ terms, each a closed-form differentiable function of $\theta$ (including $\dot{\phi}_x(\tau_s)$, obtained by automatic differentiation).
For the PL scheme, the backward pass is more subtle because the breakpoints $\{\tau_k\}$ of the piecewise-affine integrand in Proposition~\ref{prop:pl-closed-form} depend on $\theta$.}

\paragraph{Forward pass.}
\llm{The breakpoints $\{\tau_k\}$ of Proposition~\ref{prop:pl-closed-form} come from three sorted sets.
The grid points $t_k=k/N_\phi$ and the sampling times $t_{y,j}$ do not depend on $\theta$.
The preimages $\phi_{x,\mathrm{PL}}^{-1}(t_{x,i})$ do: each one is obtained by locating the grid interval on which $\phi_{x,\mathrm{PL}}$ reaches $t_{x,i}$, then inverting the affine piece of $\phi_{x,\mathrm{PL}}$ on this interval.
Since $\phi_{x,\mathrm{PL}}$ is increasing and the $t_{x,i}$ are sorted, these preimages are already sorted, and the three sets are merged into $\{\tau_k\}$ in $O(N_x+N_y+N_\phi)$ (our implementation simply uses a generic sort, which only adds a logarithmic factor).
Equation~\eqref{eq:pl-closed-form} is then evaluated in $O(d)$ per interval.}

\paragraph{Backward pass.}
\llm{The operations above are of two kinds.
Locating intervals and merging the three sets only output integer indices and a permutation: for almost every $\theta$ (i.e.\ away from the values at which two breakpoints coincide), these are locally constant in $\theta$, so automatic differentiation treats them as constants, and the backward pass of the merge reduces to applying the same permutation to the incoming gradients.
All other operations (the SoftPlus and normalized cumulative sum defining $\phi_{x,\mathrm{PL}}$, the affine inversions, the evaluation of $x_\mathrm{PL}$ and $y_\mathrm{PL}$ at the $\tau_k$, and the sum~\eqref{eq:pl-closed-form}) are differentiable closed-form expressions.
Hence no root-finding is differentiated through and no custom backward pass is needed: by the usual property of reverse-mode automatic differentiation, the gradient costs a constant factor times the forward pass. %
}

%% file: sections/appendix/averaging.tex
\subsection{Time series averaging}\label{app:averaging}

\paragraph{Hardware.}
No GPU is used in this section: every run is on CPU. The runs were spread over several machines, the fastest per core being an Intel Xeon Gold 6254 CPU at 3.10~GHz.

\paragraph{Toy datasets description.}
Each family has a template $z$ and one increasing warping
$w_i \colon [0,1] \to [0,1]$ per series, and series $i$ is generated as
\begin{equation*}
    Y_i(t_k) = z\bigl(w_i(t_k)\bigr) + \varepsilon_{ik},
    \text{ where } t_k = \tfrac{k}{T-1} \text{ for } k = 0,\dots,T-1, \; T = 200, \text{ and } \varepsilon_{ik}\sim\mathcal{N}(0,\sigma^2) \text{ with } \sigma = 0.01.
\end{equation*}
Writing
\begin{equation*}
    g_{\mu,s,h}(t) = h\exp\Bigl(-\tfrac{(t-\mu)^2}{2s^2}\Bigr),
    \quad
    p_a(x) = x^{a},
    \quad
    q_a(x) = x + \tfrac{a}{2\pi}\sin(2\pi x),
\end{equation*}
the three families differ only in $z$ and in the warpings $w_i$:

\begin{itemize}
    \item \texttt{rate} ($n=8$): $z = g_{0.60,\,0.090,\,1}$ on $t\le 0.60$ and
    $g_{0.60,\,0.045,\,1}$ beyond, and $w_i = p_{a_i}$ with $\lbrace a_i \rbrace_{i=1}^{8}$
    evenly spaced in $[0.65, 1.55]$, so each series stays ahead of $z$ ($a_i<1$) or behind it ($a_i>1$) on the whole interval.

    \item \texttt{accel} ($n=12$): $z = g_{0.60,\,0.045,\,1}$ on $t\le 0.60$ and
    $g_{0.60,\,0.022,\,1}$ beyond (the \texttt{rate} peak with both widths halved), and
    $w_i = q_{a_i}$ with $\lbrace a_i \rbrace_{i=1}^{12}$ evenly spaced in $[-0.9, 0.9]$,
    so each series runs alternately faster and slower than $z$.

    \item \texttt{ecg} ($n=12$): $w_i = q_{a_i}$ as in \texttt{accel}, and
    \begin{equation*}
        z = g_{0.30,\,0.032,\,0.18} - g_{0.445,\,0.008,\,0.12}
          + g_{0.47,\,0.009,\,1.00} - g_{0.50,\,0.011,\,0.28}
          + g_{0.66,\,0.048,\,0.32}.
    \end{equation*}
\end{itemize}

The parameters $a_i$ of the warpings are fixed in each family, so two draws of the series differ only through the noise $\varepsilon_{ik}$. Tables~\ref{tab:averaging-losses} and~\ref{tab:averaging-gamma-sweep} report the mean and standard deviation over ten draws.
Figures~\ref{fig:barycenters-grid} and~\ref{fig:gamma-sweep} show the draw of one seed.

\paragraph{INR barycenter.}\label{app:inr}
An implicit neural representation (INR)~\citep{naour2024time} represents the barycenter as a function $c_\omega:[0,1]\to\mathbb{R}^d$, given by a neural network with parameters $\omega$, instead of a vector of values at fixed timestamps. A time $t$ is first mapped to $2L$ Fourier features
\begin{equation*}
    \gamma(t)=\bigl(\sin(2\pi f_1 t),\cos(2\pi f_1 t),\dots,\sin(2\pi f_L t),\cos(2\pi f_L t)\bigr),
    \quad\text{with } f_l = 2^{l-2} \text{ for } l=1,\dots,L,
\end{equation*}
so that the frequencies are spaced geometrically, each twice the previous one, from $f_1=1/2$ to $f_L=2^{L-2}$. These features let the network represent sharp peaks. An MLP with ReLU activations then maps $\gamma(t)$ to $c_\omega(t)$ (Figure~\ref{fig:inr}). In every experiment of this section, $L=10$, so the frequencies range from $1/2$ to $256$, and the MLP has $5$ hidden layers of width $64$. The warpings $\phi_i$ and the parameters $\omega$ are optimized by gradient descent on the FastCDTW loss. Since $c_\omega$ can be evaluated at any time, it is used with the MC scheme, and after training it can be read at any resolution, independently of the sampling grid of the averaged series (Figure~\ref{fig:inr-continuous}).

\begin{figure}[t]
\centering
\begin{tikzpicture}[
  >=stealth, thick,
  neuron/.style={circle, draw, minimum size=5mm, inner sep=0pt},
  box/.style={draw, rounded corners, align=center, inner sep=4pt},
]
  \node[neuron, fill=gray!15] (t) at (0,0) {$t$};
  \node[box, fill=orange!15, right=1.0cm of t] (ff)
    {$\sin(2\pi f_1 t)$\\ $\cos(2\pi f_1 t)$\\ $\vdots$\\ $\sin(2\pi f_L t)$\\ $\cos(2\pi f_L t)$};
  \node[below=1mm of ff, font=\small] {$\gamma(t)\in\mathbb{R}^{2L}$};
  \foreach \l/\x in {1/5.0, 2/6.5, 3/8.5} {
    \foreach \n/\y in {1/1.2, 2/0.4, 3/-0.4, 4/-1.2} {
      \node[neuron, fill=blue!10] (h\l\n) at (\x,\y) {};
    }
  }
  \node at (7.5,0) {$\cdots$};
  \foreach \n in {1,2,3,4} { \draw[->, gray] (ff.east) -- (h1\n); }
  \foreach \a in {1,2,3,4} { \foreach \b in {1,2,3,4} { \draw[gray!60, thin] (h1\a) -- (h2\b); } }
  \draw[decorate, decoration={brace, amplitude=5pt, mirror}] (4.8,-1.6) -- (8.7,-1.6)
    node[midway, below=6pt, font=\small] {MLP};
  \node[box, fill=green!15, right=1.2cm of h32] (out) {$c_\omega(t)$};
  \foreach \n in {1,2,3,4} { \draw[->, gray] (h3\n) -- (out); }
  \draw[->] (t) -- (ff);
\end{tikzpicture}
\caption{INR barycenter: a time $t\in[0,1]$ is mapped to Fourier features $\gamma(t)$, then an MLP returns the value $c_\omega(t)\in\mathbb{R}^d$ of the barycenter at time $t$. In our experiments, $L=10$ and the MLP has $5$ hidden layers of width $64$.}
\label{fig:inr}
\end{figure}

\begin{table}[tp]
 \centering
 \small
 \setlength{\tabcolsep}{4pt}
 \begin{tabular}{lrrr}
 \toprule
 method & \texttt{accel} & \texttt{ecg} & \texttt{rate} \\
 \midrule
 Euclidean mean & $11.88${\scriptsize$\pm$0.11} & $14.81${\scriptsize$\pm$0.06} & $13.19${\scriptsize$\pm$0.10} \\
 DTW (DBA) & $3.16${\scriptsize$\pm$0.06} & $4.43${\scriptsize$\pm$0.25} & $1.48${\scriptsize$\pm$0.08} \\
 \midrule
 Soft-DTW $\gamma=10$ & $15.92${\scriptsize$\pm$0.04} & $22.11${\scriptsize$\pm$0.04} & $9.75${\scriptsize$\pm$0.03} \\
 Soft-DTW $\gamma=1$ & $2.72${\scriptsize$\pm$0.05} & $10.34${\scriptsize$\pm$0.06} & $1.32${\scriptsize$\pm$0.02} \\
 Soft-DTW $\gamma=10^{-1}$ & $2.40${\scriptsize$\pm$0.07} & $4.77${\scriptsize$\pm$0.05} & $1.22${\scriptsize$\pm$0.01} \\
 Soft-DTW $\gamma=10^{-2}$ & $2.31${\scriptsize$\pm$0.04} & $\mathbf{4.02}${\scriptsize$\pm$\textbf{0.03}} & $1.22${\scriptsize$\pm$0.01} \\
 Soft-DTW $\gamma=10^{-3}$ & $3.02${\scriptsize$\pm$0.09} & $4.25${\scriptsize$\pm$0.43} & $1.19${\scriptsize$\pm$0.02} \\
 Soft-DTW $\gamma=10^{-4}$ & $3.10${\scriptsize$\pm$0.09} & $4.37${\scriptsize$\pm$0.19} & $1.33${\scriptsize$\pm$0.05} \\
 \midrule
 FastCDTW-PL & $\mathbf{2.20}${\scriptsize$\pm$\textbf{0.02}} & $4.82${\scriptsize$\pm$0.43} & $\mathbf{1.08}${\scriptsize$\pm$\textbf{0.01}} \\
 FastCDTW-MC & $2.42${\scriptsize$\pm$0.05} & $4.80${\scriptsize$\pm$0.57} & $1.16${\scriptsize$\pm$0.02} \\
 \bottomrule
 \end{tabular}
 \caption{DTW loss of each barycenter, as in Table~\ref{tab:averaging-losses},
 with Soft-DTW at every temperature $\gamma$ of the sweep. Mean $\pm$ standard deviation over the same ten draws of the series.}
 \label{tab:averaging-gamma-sweep}
\end{table}

\paragraph{Hyperparameters.}
Series are used without $z$-normalization. Every barycenter starts from the Euclidean mean, and every warping $\phi_i$ from the identity ($\theta=0$). DBA and the Soft-DTW barycenter use the implementations of \texttt{tslearn}~\citep{tslearn}. Soft-DTW is run for $\gamma\in\{10^{-4},10^{-3},10^{-2},10^{-1},1,10\}$, and the main text reports, on each family, the value of $\gamma$ with the lowest DTW loss (Table~\ref{tab:averaging-gamma-sweep}). Both FastCDTW solvers usex the warping parametrization of Section~\ref{sec:param} with $N_\phi=T-1$, and alternate $30$ times between a step on the warpings and a step on the centroid, each step being $500$ Adam iterations. The learning rate is $10^{-3}$ for the vector solver. The INR centroid is the network of Figure~\ref{fig:inr}; before the alternation, it is fitted to the Euclidean mean by $2000$ Adam steps with learning rate $10^{-2}$. The learning rate of the INR solver is then $5\times10^{-3}$ on \texttt{accel} and \texttt{rate} and $1.5\times10^{-2}$ on \texttt{ecg}, and its MC scheme draws $S=200$ samples per step on \texttt{accel} and \texttt{rate} and $S=800$ on \texttt{ecg}.
We take $L=10$ for every INR run of this section.

\begin{figure}[tp]
 \centering
 \includegraphics[width=\columnwidth]{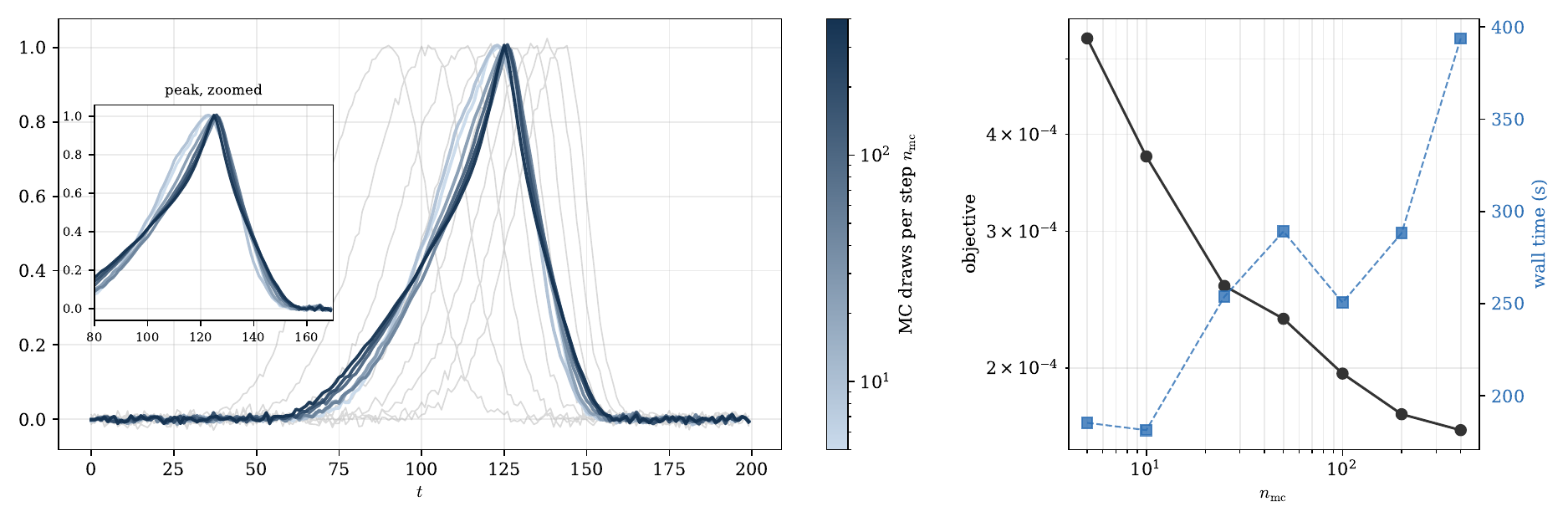}
 \includegraphics[width=\columnwidth]{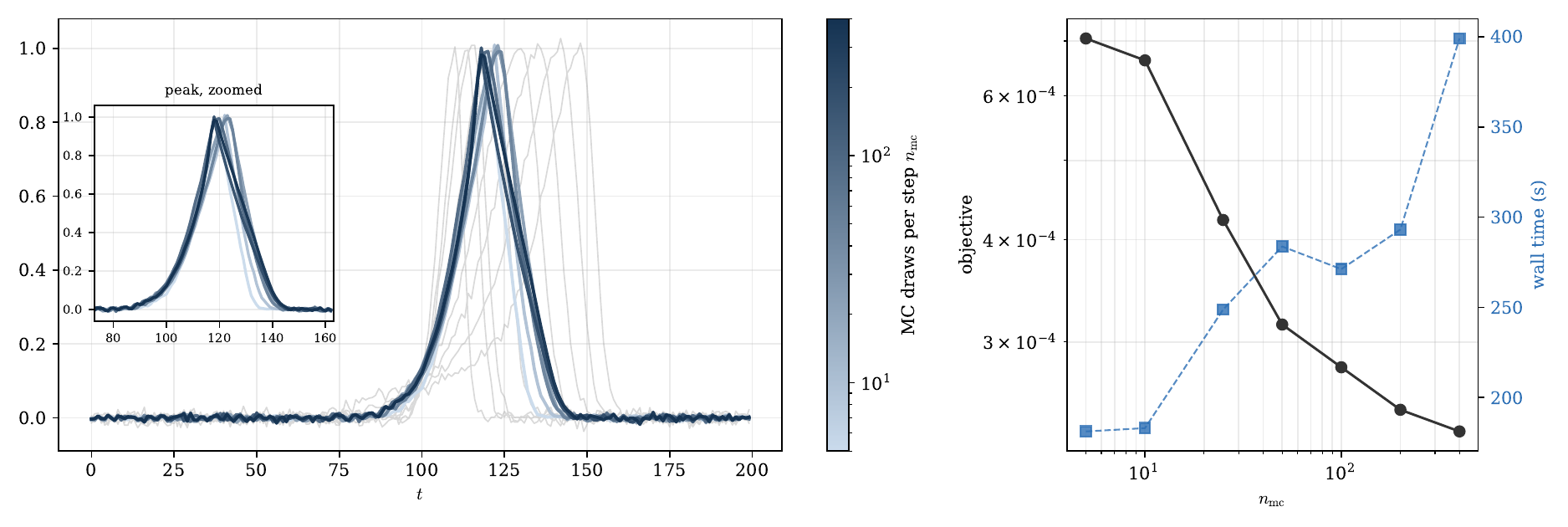}
 \caption{Effect of the Monte Carlo budget $S$ on the INR barycenter,
 \texttt{rate} (top) and \texttt{accel} (bottom), all other settings fixed.
 Left: the barycenter, from light to dark as $S$ grows.
 Right: the final objective against $S$, with wall-clock time on the right
 axis.}
 \label{fig:mc-sweep}
\end{figure}

\paragraph{Monte Carlo budget.}
The MC scheme draws $S$ times per step, so $S$ sets a trade-off between the variance of the gradient and the cost of each step. Figure~\ref{fig:mc-sweep} varies $S\in\{5,10,25,50,100,200,400\}$ for the INR barycenter \texttt{rate} and \texttt{accel}, all other settings being those given above ($30$ alternations of $500$ Adam steps, learning rate $5\times10^{-3}$, $L=10$). The left panels show the barycenters, from light to dark as $S$ grows, and the right panels the final objective against $S$, with the run time on the right axis. The objective decreases with $S$, from $5.3\times10^{-4}$ to $1.7\times10^{-4}$ on \texttt{rate} and from $7.0\times10^{-4}$ to $2.3\times10^{-4}$ on \texttt{accel}, and at $S=200$, our default, it is within $5\%$ and $7\%$ of its value at $S=400$. Over the same range, the run time grows from about $180$\,s to $400$\,s. The peak height of the barycenter seems to depend on $S$ ($0.98$ to $1.01$).

\begin{figure}[tp]
 \centering
 \includegraphics[width=0.7\columnwidth]{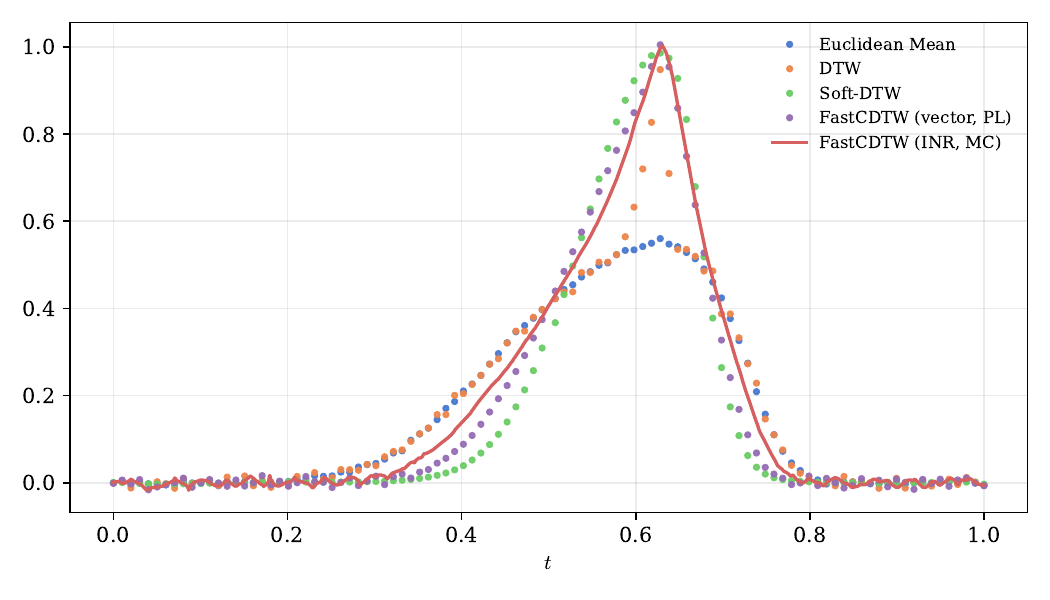}
 \includegraphics[width=0.7\columnwidth]{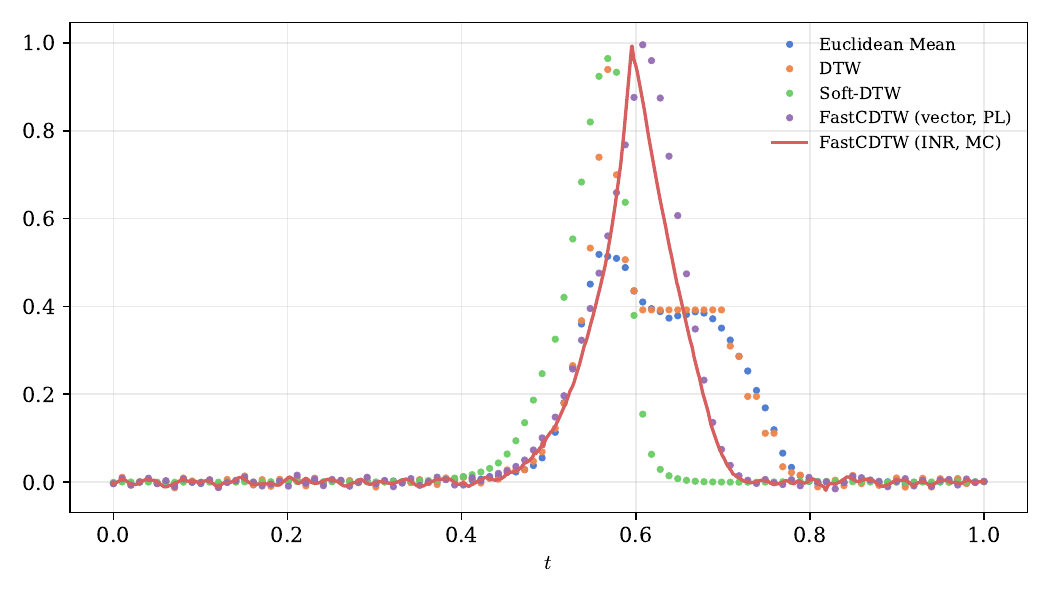}
 \includegraphics[width=0.7\columnwidth]{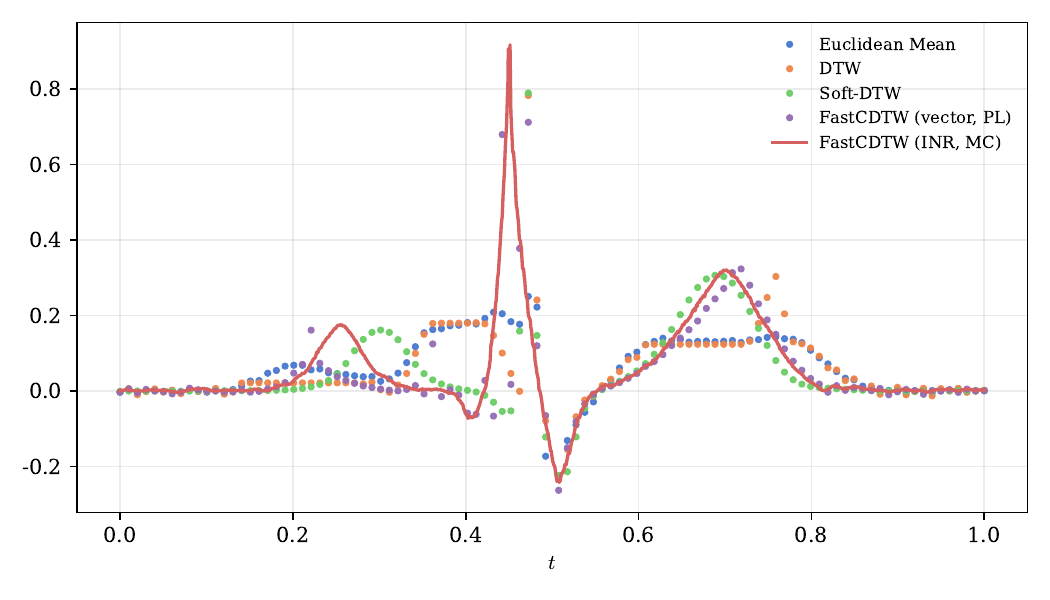}
 \caption{Coarsely sampled series ($T=100$ kept samples), \texttt{rate}
 (top), \texttt{accel} (middle) and \texttt{ecg} (bottom). The discrete
 barycenters live on the sample grid, while the INR centroid is a function. Soft-DTW is again taken at the temperature
 with the lowest DTW loss, selected on this grid.}
 \label{fig:inr-continuous}
\end{figure}

\paragraph{Coarsely sampled series.}
A vector barycenter only has values at the sampling times of the series, whereas the INR barycenter is a function of continuous time, which can be evaluated at any time once trained. To see when this matters, we keep $100$ evenly spaced samples out of the $200$ of each series, and compute the barycenters for each method on these $100$ sampling times, with Soft-DTW at the temperature with the lowest DTW loss and the INR trained with $15$ alternations and $S=200$. In Figure~\ref{fig:inr-continuous}, the barycenters of the other methods are drawn as points at the $100$ sampling times, and the INR as a curve evaluated at $2000$ evenly spaced times. On \texttt{rate} and \texttt{accel}, the peak spans several sampling times, so these $100$ values are enough to describe it, and the INR matches the vector FastCDTW barycenter (peak $1.00$ against $1.01$ on \texttt{rate}, $0.99$ against $1.00$ on \texttt{accel}). On \texttt{ecg}, the central peak is about one sampling interval wide, so the highest sample of each series ranges from $0.80$ to $1.00$ (mean $0.92$). At the sampling times, all barycenters stay at or below $0.79$, including the INR ($0.73$), but the INR reaches $0.92$ between two of them.

\begin{figure}[t]
 \centering
 \includegraphics[width=\columnwidth]{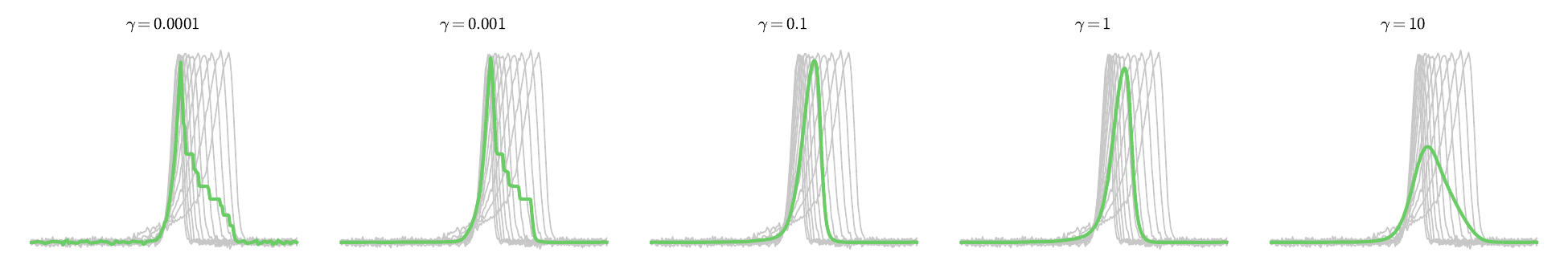}
 \includegraphics[width=\columnwidth]{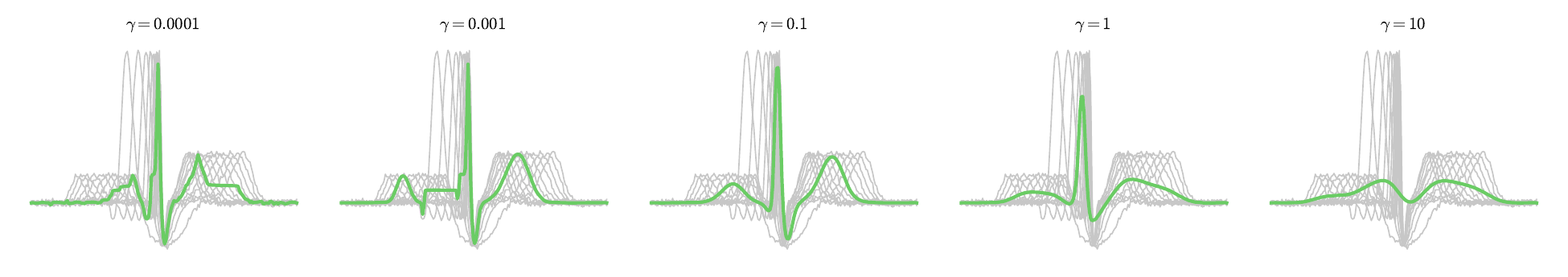}
 \includegraphics[width=\columnwidth]{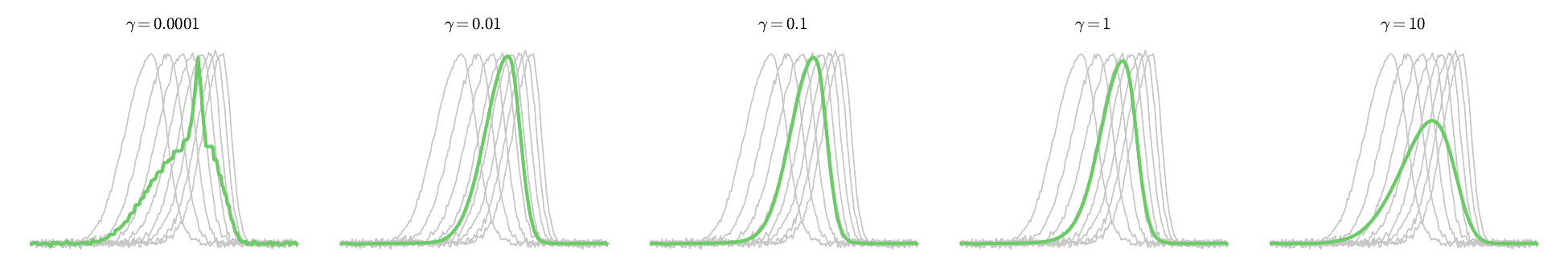}
 \caption{Soft-DTW barycenter at the temperatures not shown in
 Figure~\ref{fig:barycenters-grid}, on \texttt{accel} (top), \texttt{ecg}
 (middle) and \texttt{rate} (bottom), on the same series as that figure.}
 \label{fig:gamma-sweep}
\end{figure}

%% file: sections/appendix/nn.tex
\subsection{Time series classification}\label{app:nn}
\paragraph{Hyperparameters.}
Series are used as provided by the archive, without further $z$-normalization. DTW and Soft-DTW use the implementations of \texttt{tslearn}~\citep{tslearn}. The Soft-DTW temperature is selected in $\{10^{-4},10^{-3},\dots,10^{4}\}$ by the 1-NN accuracy averaged over $5$ random splits of the training set, with $2/3$ of it as reference series and $1/3$ as queries. For both FastCDTW schemes, the warping of each (test, train) pair uses the grid parametrization of Section~\ref{sec:param} with $N_\phi=T-1$, starts from the identity ($\theta=0$), and is fitted by $n_\text{iter}=300$ Adam steps, the same budget for every pair. The learning rate is selected in $\{10^{-3},5\times10^{-3},10^{-2}\}$ with the same protocol as the Soft-DTW temperature. The MC scheme draws $S=800$ samples per step.

\begin{table}[tp]
 \centering
 \small
 \begin{tabular}{lrr}
 \toprule
 method & CPU (s) & GPU (s) \\
 \midrule
 Euclidean & 0.01 & -- \\
 DTW & 27.3 & -- \\
 Soft-DTW & 199.3 & -- \\
 FastCDTW-PL & 2114.6 & 20.1 \\
 FastCDTW-MC & 2484.9 & 15.2 \\
 \bottomrule
 \end{tabular}
 \caption{Cost of one cross-distance matrix, in seconds, averaged over the datasets on which every method was timed on CPU, excluding hyperparameter selection.}
 \label{tab:nn-time}
\end{table}

\paragraph{Hardware.}
FastCDTW runs on GPU, with one NVIDIA A40 (48~GB), both for the accuracies and for the GPU column of Table~\ref{tab:nn-time}. The Euclidean distance, DTW and Soft-DTW run on CPU. The CPU column of Table~\ref{tab:nn-time} is measured on a single core; these jobs ran on several machines, the fastest per core being an Intel Xeon Gold 6254 CPU at 3.10~GHz.

\paragraph{Timing protocol.}
The timings of Table~\ref{tab:nn-time} cover one final cross-distance matrix:
$\gamma$ and the learning rate are read from the selection already made on the
training set, so no cross-validation is counted. All GPU rows were
measured on one NVIDIA A40.

\paragraph{Per-dataset accuracies.}
Tables~\ref{tab:nn-full-a} and \ref{tab:nn-full-b} give the 1-NN test
accuracy of every method on each of the 95 datasets. Bold marks every
method whose accuracy, as printed, is at least $99\%$ of the best accuracy on that dataset.

\begin{table}[tp]
    \centering
    \footnotesize
    \setlength{\tabcolsep}{4pt}
    \begin{tabular}{lrrrrr}
        \toprule
        dataset & Euc. & DTW & Soft-DTW & FastCDTW-PL & FastCDTW-MC \\
        \midrule
        ACSF1 & 0.540 & \textbf{0.640} & \textbf{0.640} & 0.520 & 0.530 \\
        Adiac & 0.611 & 0.604 & 0.598 & \textbf{0.693} & \textbf{0.693} \\
        ArrowHead & 0.800 & 0.703 & 0.794 & \textbf{0.823} & 0.806 \\
        BME & 0.827 & 0.893 & \textbf{0.993} & \textbf{0.993} & \textbf{0.993} \\
        Beef & \textbf{0.667} & 0.633 & 0.633 & \textbf{0.667} & \textbf{0.667} \\
        BeetleFly & \textbf{0.750} & 0.700 & 0.700 & 0.700 & 0.700 \\
        BirdChicken & 0.550 & \textbf{0.750} & \textbf{0.750} & \textbf{0.750} & \textbf{0.750} \\
        CBF & 0.852 & \textbf{0.997} & \textbf{0.997} & \textbf{0.996} & \textbf{1.000} \\
        Car & 0.733 & 0.733 & 0.750 & \textbf{0.800} & 0.783 \\
        Chinatown & 0.945 & \textbf{0.974} & \textbf{0.974} & 0.948 & 0.948 \\
        ChlorineConcentration & \textbf{0.650} & \textbf{0.648} & 0.646 & \textbf{0.653} & \textbf{0.649} \\
        CinCECGTorso & 0.897 & 0.651 & \textbf{0.920} & 0.893 & 0.904 \\
        Coffee & \textbf{1.000} & \textbf{1.000} & \textbf{1.000} & \textbf{1.000} & \textbf{1.000} \\
        Computers & 0.576 & \textbf{0.700} & \textbf{0.696} & 0.612 & 0.548 \\
        CricketX & 0.577 & 0.754 & \textbf{0.785} & \textbf{0.782} & 0.749 \\
        CricketY & 0.567 & 0.744 & \textbf{0.754} & 0.744 & 0.744 \\
        CricketZ & 0.587 & 0.754 & \textbf{0.782} & \textbf{0.785} & 0.762 \\
        DiatomSizeReduction & 0.935 & \textbf{0.967} & \textbf{0.967} & \textbf{0.958} & \textbf{0.958} \\
        DistalPhalanxOutlineAgeGroup & 0.626 & \textbf{0.770} & \textbf{0.770} & 0.683 & 0.683 \\
        DistalPhalanxOutlineCorrect & 0.717 & 0.717 & 0.725 & 0.739 & \textbf{0.750} \\
        DistalPhalanxTW & 0.633 & 0.590 & \textbf{0.655} & 0.612 & 0.619 \\
        ECG200 & \textbf{0.880} & 0.770 & 0.840 & 0.860 & 0.860 \\
        ECGFiveDays & 0.797 & 0.768 & \textbf{0.906} & \textbf{0.901} & \textbf{0.908} \\
        EOGHorizontalSignal & 0.442 & 0.442 & \textbf{0.550} & 0.511 & 0.486 \\
        EOGVerticalSignal & 0.362 & 0.434 & \textbf{0.489} & 0.481 & 0.483 \\
        Earthquakes & 0.712 & 0.719 & \textbf{0.777} & 0.719 & 0.763 \\
        FaceAll & 0.714 & \textbf{0.808} & 0.760 & 0.781 & 0.779 \\
        FaceFour & 0.784 & 0.830 & 0.818 & \textbf{0.852} & 0.841 \\
        FacesUCR & 0.769 & 0.905 & \textbf{0.926} & \textbf{0.926} & \textbf{0.924} \\
        FiftyWords & 0.631 & 0.690 & \textbf{0.796} & 0.785 & 0.767 \\
        Fish & 0.783 & 0.823 & \textbf{0.903} & \textbf{0.903} & 0.886 \\
        FreezerRegularTrain & 0.805 & 0.899 & \textbf{0.949} & \textbf{0.954} & \textbf{0.955} \\
        FreezerSmallTrain & 0.676 & \textbf{0.759} & 0.690 & 0.742 & 0.735 \\
        Fungi & 0.839 & 0.796 & 0.796 & \textbf{0.984} & \textbf{0.989} \\
        GunPoint & 0.913 & 0.907 & \textbf{0.980} & 0.973 & \textbf{0.987} \\
        GunPointAgeSpan & 0.968 & 0.984 & \textbf{0.997} & \textbf{0.994} & \textbf{0.994} \\
        GunPointMaleVersusFemale & \textbf{0.994} & 0.984 & \textbf{1.000} & \textbf{1.000} & \textbf{1.000} \\
        GunPointOldVersusYoung & \textbf{1.000} & \textbf{1.000} & \textbf{1.000} & \textbf{1.000} & \textbf{1.000} \\
        Ham & \textbf{0.600} & 0.467 & 0.581 & 0.590 & 0.581 \\
        Haptics & 0.370 & 0.377 & \textbf{0.477} & 0.406 & 0.393 \\
        Herring & 0.516 & 0.531 & \textbf{0.594} & 0.562 & \textbf{0.594} \\
        HouseTwenty & 0.681 & \textbf{0.840} & \textbf{0.840} & 0.790 & 0.782 \\
        InlineSkate & 0.342 & 0.384 & \textbf{0.433} & 0.398 & 0.416 \\
        InsectEPGRegularTrain & \textbf{1.000} & \textbf{1.000} & \textbf{1.000} & \textbf{1.000} & \textbf{1.000} \\
        InsectEPGSmallTrain & \textbf{1.000} & \textbf{1.000} & \textbf{1.000} & \textbf{1.000} & \textbf{1.000} \\
        InsectWingbeatSound & \textbf{0.562} & 0.355 & \textbf{0.559} & 0.551 & 0.546 \\
        ItalyPowerDemand & \textbf{0.955} & 0.950 & 0.948 & \textbf{0.962} & \textbf{0.962} \\
        LargeKitchenAppliances & 0.493 & \textbf{0.795} & \textbf{0.797} & 0.667 & 0.683 \\
        Lightning2 & 0.754 & 0.869 & \textbf{0.902} & 0.836 & 0.869 \\
        Lightning7 & 0.575 & 0.726 & 0.726 & \textbf{0.781} & 0.753 \\
        Mallat & 0.914 & \textbf{0.934} & \textbf{0.934} & \textbf{0.932} & \textbf{0.931} \\
        Meat & \textbf{0.933} & \textbf{0.933} & \textbf{0.933} & \textbf{0.933} & \textbf{0.933} \\
        MedicalImages & 0.684 & 0.737 & \textbf{0.751} & \textbf{0.758} & \textbf{0.751} \\
        MiddlePhalanxOutlineAgeGroup & \textbf{0.519} & 0.500 & 0.494 & 0.494 & 0.481 \\
        MiddlePhalanxOutlineCorrect & \textbf{0.766} & 0.698 & 0.722 & 0.746 & 0.732 \\
        MiddlePhalanxTW & 0.513 & 0.506 & 0.519 & \textbf{0.558} & 0.552 \\
        MixedShapesRegularTrain & 0.897 & 0.842 & \textbf{0.921} & \textbf{0.917} & \textbf{0.917} \\
        MixedShapesSmallTrain & 0.835 & 0.780 & 0.780 & \textbf{0.869} & \textbf{0.871} \\
        MoteStrain & 0.879 & 0.835 & \textbf{0.920} & 0.886 & 0.895 \\
        OSULeaf & 0.521 & 0.591 & \textbf{0.649} & 0.607 & 0.620 \\
        OliveOil & \textbf{0.867} & 0.833 & \textbf{0.867} & \textbf{0.867} & \textbf{0.867} \\
        PhalangesOutlinesCorrect & 0.761 & 0.728 & 0.719 & \textbf{0.766} & \textbf{0.769} \\
        Phoneme & 0.109 & 0.228 & \textbf{0.242} & 0.157 & 0.128 \\
        PigAirwayPressure & 0.091 & 0.183 & \textbf{0.197} & 0.159 & 0.163 \\
        \bottomrule
    \end{tabular}
    \caption{1-NN test accuracy per dataset (1 of 2). Bold: within $99\%$ of the best method on that row.}
    \label{tab:nn-full-a}
\end{table}

\begin{table}[tp]
    \centering
    \footnotesize
    \setlength{\tabcolsep}{4pt}
    \begin{tabular}{lrrrrr}
        \toprule
        dataset & Euc. & DTW & Soft-DTW & FastCDTW-PL & FastCDTW-MC \\
        \midrule
        PigArtPressure & 0.288 & 0.481 & 0.553 & 0.558 & \textbf{0.572} \\
        PigCVP & 0.139 & 0.327 & \textbf{0.418} & 0.183 & 0.188 \\
        Plane & 0.962 & \textbf{1.000} & \textbf{1.000} & \textbf{1.000} & \textbf{1.000} \\
        PowerCons & \textbf{0.978} & 0.922 & \textbf{0.972} & 0.967 & 0.967 \\
        ProximalPhalanxOutlineAgeGroup & 0.785 & \textbf{0.805} & 0.780 & \textbf{0.805} & \textbf{0.810} \\
        ProximalPhalanxOutlineCorrect & 0.808 & 0.784 & 0.725 & \textbf{0.842} & \textbf{0.845} \\
        ProximalPhalanxTW & 0.707 & 0.756 & \textbf{0.761} & \textbf{0.766} & \textbf{0.766} \\
        RefrigerationDevices & 0.395 & \textbf{0.464} & 0.456 & 0.405 & 0.405 \\
        Rock & \textbf{0.640} & 0.540 & 0.540 & 0.540 & 0.540 \\
        ScreenType & 0.360 & \textbf{0.397} & 0.376 & 0.381 & 0.365 \\
        SemgHandGenderCh2 & 0.898 & 0.915 & 0.915 & \textbf{0.943} & 0.930 \\
        ShapeletSim & 0.539 & 0.650 & 0.650 & 0.639 & \textbf{0.672} \\
        ShapesAll & 0.752 & 0.768 & \textbf{0.827} & 0.800 & 0.802 \\
        SmallKitchenAppliances & 0.341 & 0.643 & \textbf{0.667} & 0.459 & 0.501 \\
        SmoothSubspace & 0.953 & 0.827 & 0.887 & 0.953 & \textbf{0.973} \\
        SonyAIBORobotSurface1 & 0.696 & \textbf{0.725} & \textbf{0.725} & \textbf{0.727} & 0.710 \\
        SonyAIBORobotSurface2 & \textbf{0.859} & 0.831 & 0.836 & \textbf{0.855} & \textbf{0.855} \\
        Strawberry & \textbf{0.946} & \textbf{0.941} & \textbf{0.946} & \textbf{0.943} & \textbf{0.943} \\
        SwedishLeaf & 0.789 & 0.792 & 0.810 & \textbf{0.870} & \textbf{0.867} \\
        Symbols & 0.899 & 0.950 & \textbf{0.968} & 0.949 & 0.948 \\
        SyntheticControl & 0.880 & \textbf{0.993} & \textbf{0.990} & \textbf{0.987} & \textbf{0.987} \\
        ToeSegmentation1 & 0.680 & 0.772 & \textbf{0.842} & 0.811 & 0.820 \\
        ToeSegmentation2 & 0.808 & 0.838 & 0.838 & 0.869 & \textbf{0.900} \\
        Trace & 0.760 & \textbf{1.000} & \textbf{1.000} & \textbf{0.990} & \textbf{0.990} \\
        TwoLeadECG & 0.747 & \textbf{0.904} & \textbf{0.905} & 0.874 & 0.880 \\
        UMD & 0.806 & 0.882 & 0.882 & \textbf{0.986} & \textbf{0.986} \\
        Wine & 0.611 & 0.574 & 0.556 & \textbf{0.704} & \textbf{0.704} \\
        WordSynonyms & 0.618 & 0.649 & \textbf{0.766} & 0.757 & 0.755 \\
        Worms & 0.455 & \textbf{0.584} & 0.532 & 0.545 & 0.545 \\
        WormsTwoClass & 0.610 & 0.623 & \textbf{0.649} & \textbf{0.649} & 0.636 \\
        Yoga & 0.830 & 0.836 & \textbf{0.853} & \textbf{0.859} & \textbf{0.858} \\
        \midrule
        mean & 0.703 & 0.739 & 0.767 & 0.763 & 0.763 \\
        \bottomrule
    \end{tabular}
    \caption{1-NN test accuracy per dataset (3 of 3), with the mean over all 95 datasets. Bold: within $99\%$ of the best method on that row.}
    \label{tab:nn-full-b}
\end{table}

%% file: sections/appendix/domain_adaptation.tex
\subsection{Domain adaptation}\label{app:mad}

\paragraph{Hyperparameters.}
We follow the experimental protocol of \citet{mad}, and only detail below the settings needed to reproduce our results.
The backbone classifier is a fully convolutional network made of three 1D convolutional blocks with $128$, $256$ and $128$ filters of size $8$, $5$ and $3$ respectively, each followed by batch normalization and a ReLU activation, then global average pooling over time and a linear classification layer, as in \citet{mad}.
It is trained with Adam, with learning rate $10^{-4}$ and batch size $256$ (class-balanced mini-batches drawn from each domain), for $3{,}000$ iterations.
The MAD loss weights its temporal alignment (feature transport) and label transport (cross-entropy) terms with $\alpha=0.01$ and $\beta=0.01$, the values used by \citet{mad} with DTW.
With FastCDTW, the alignment cost is an integral over $[0,1]$ rather than a sum over the matched pairs of a DTW path, so its scale differs by a factor of the order of the series length; we therefore rescale $\alpha$ to $\alpha\,\bar{T}$, with $\bar{T}=\lfloor (T_s+T_t)/2 \rfloor$ the average of the source and target series lengths, for both FastCDTW variants.
The shared warping $\phi$ uses the grid parametrization of Section~\ref{sec:param} with $N_\phi=\bar{T}$, is initialized to the identity ($\theta=0$), and is trained jointly with the backbone, with the same optimizer and learning rate as the backbone.
The MC variant draws $S=50$ samples per loss evaluation.

\paragraph{Other parametrizations of $\phi$.}
Besides the grid parametrization of Section~\ref{sec:param}, we tried two smooth parametrizations of $\phi$. Both lie in $\mathcal{C}$ whatever the value of $\theta$.
\begin{itemize}
  \item \emph{Integrated MLP.} $\phi(t)=\int_0^t g_\theta/\int_0^1 g_\theta$ with $g_\theta$ a small MLP whose last layer is a SoftPlus, so that $g_\theta>0$ for every $\theta$ and $\dot\phi=g_\theta/\int_0^1 g_\theta$ in closed form. Smooth rather than piecewise-affine, so evaluated with MC.
  \item \emph{INR integrated MLP.} The same construction with $g_\theta$ applied to a Fourier encoding $\gamma(t)=(\sin(2\pi f_l t),\cos(2\pi f_l t))_{l\leq L}$ of $t$, to represent high-frequency warpings; MC as well. The frequencies follow the same geometric rule as for the INR barycenter of Appendix~\ref{app:inr}, $f_l=2^{l-2}$ for $l=1,\dots,L$, here with $L=8$, so they range from $1/2$ to $64$. The MLP $g_\theta$ has $4$ hidden layers of width $64$ with ReLU activations, followed by the SoftPlus output layer.
\end{itemize}

\paragraph{Impact of the warping parametrization $\phi$.}
Table~\ref{tab:generator_impact} compares these parametrizations of $\phi$ within FastCDTW-MC.
The grid parametrization is the strongest and most robust choice, matching or slightly exceeding DTW's accuracy on both datasets.
The INR integrated MLP stays close to DTW's accuracy on both datasets. %
The Integrated MLP, which lacks this high-frequency Fourier encoding, lags further behind on both datasets, underlining the benefit of Fourier features for representing warpings competitive with DTW's.
This validates our default choice of grid parametrization for the other experiments.

\begin{table}
  \centering
  \begin{tabular}{lcc}
    \toprule
    Warping ($\phi$) & TarnBrittany & miniTimeMatch \\
    \midrule
    Grid & $\boldsymbol{94.4 \pm 5.8}$ & $\boldsymbol{84.5 \pm 7.2}$ \\
    INR integrated MLP & $93.1 \pm 7.5$ & $79.3 \pm 15.2$ \\
    Integrated MLP & $92.0 \pm 9.4$ & $76.6 \pm 9.8$ \\
    \midrule
    DTW & $94.0 \pm 6.4$ & $78.9 \pm 8.0$ \\
    \bottomrule
  \end{tabular}
  \caption{Impact of the warping parametrization ($\phi$) on FastCDTW-MC accuracy (mean over 3 seeds and over problem pairs within each dataset; std over problem pairs only).}
  \label{tab:generator_impact}
\end{table}

%% file: biblio.bib
@inproceedings{Wehenkel2019UMNN,
 author = {Wehenkel, Antoine and Louppe, Gilles},
 booktitle = {Advances in Neural Information Processing Systems},
 editor = {H. Wallach and H. Larochelle and A. Beygelzimer and F. d\textquotesingle Alch\'{e}-Buc and E. Fox and R. Garnett},
 pages = {},
 publisher = {Curran Associates, Inc.},
 title = {Unconstrained Monotonic Neural Networks},
 url = {https://proceedings.neurips.cc/paper_files/paper/2019/file/2a084e55c87b1ebcdaad1f62fdbbac8e-Paper.pdf},
 volume = {32},
 year = {2019},
 url={https://papers.nips.cc/paper_files/paper/2019/hash/2a084e55c87b1ebcdaad1f62fdbbac8e-Abstract.html}
}

@inproceedings{runje2023constrained,
  title = 	 {Constrained Monotonic Neural Networks},
  author =       {Runje, Davor and Shankaranarayana, Sharath M},
  booktitle = 	 {Proceedings of the 40th International Conference on Machine Learning},
  pages = 	 {29338--29353},
  year = 	 {2023},
  editor = 	 {Krause, Andreas and Brunskill, Emma and Cho, Kyunghyun and Engelhardt, Barbara and Sabato, Sivan and Scarlett, Jonathan},
  volume = 	 {202},
  series = 	 {Proceedings of Machine Learning Research},
  month = 	 {23--29 Jul},
  publisher =    {PMLR},
  url = 	 {https://proceedings.mlr.press/v202/runje23a.html}
}

@article{Vintsyuk1968,
  author       = {Vintsyuk, T. K.},
  title        = {Speech Discrimination by Dynamic Programming},
  journal      = {Cybernetics},
  volume       = {4},
  number       = {1},
  pages        = {52--57},
  year         = {1968},
  url = {https://doi.org/10.1007/BF01074755}
}

@article{SakoeChiba1978,
  author={Sakoe, H. and Chiba, S.},
  journal={IEEE Transactions on Acoustics, Speech, and Signal Processing}, 
  title={Dynamic programming algorithm optimization for spoken word recognition}, 
  year={1978},
  url = {https://doi.org/10.1109/TASSP.1978.1163055},
  volume={26},
  number={1},
  pages={43-49},
  doi={10.1109/TASSP.1978.1163055}
}

@InProceedings{softdtw,
  title = 	 {Soft-{DTW}: a Differentiable Loss Function for Time-Series},
  author =       {Marco Cuturi and Mathieu Blondel},
  booktitle = 	 {Proceedings of the 34th International Conference on Machine Learning},
  pages = 	 {894--903},
  year = 	 {2017},
  editor = 	 {Precup, Doina and Teh, Yee Whye},
  volume = 	 {70},
  series = 	 {Proceedings of Machine Learning Research},
  month = 	 {06--11 Aug},
  publisher =    {PMLR},
  url = 	 {https://proceedings.mlr.press/v70/cuturi17a.html}
}

@article{gdtw,
  title = {A General Optimization Framework for Dynamic Time Warping},
  author = {Deriso, Dave and Boyd, Stephen},
  year = {2023},
  journal = {Optimization and Engineering},
  volume = {24},
  number = {2},
  pages = {1411-1432},
  publisher = {{Springer Science and Business Media LLC}},
  url = {https://doi.org/10.1007/s11081-022-09738-z},
}

@inproceedings{mad,
  TITLE = {{Match-And-Deform: Time Series Domain Adaptation through Optimal Transport and Temporal Alignment}},
  AUTHOR = {Painblanc, Fran{\c c}ois and Chapel, Laetitia and Courty, Nicolas and Friguet, Chlo{\'e} and Pelletier, Charlotte and Tavenard, Romain},
  BOOKTITLE = {{ECML PKDD 2023}},
  PUBLISHER = {{Springer Nature Switzerland}},
  SERIES = {Lecture Notes in Computer Science},
  VOLUME = {14173},
  PAGES = {341-356},
  YEAR = {2023},
  url={https://doi.org/10.1007/978-3-031-43424-2_21},
}

@article{buchin2025cdtw,
  title={Computing continuous dynamic time warping of time series in polynomial time}, 
  volume={16}, 
  url={https://jocg.org/index.php/jocg/article/view/4022}, DOI={10.20382/jocg.v16i1a21}, 
  number={1}, 
  journal={Journal of Computational Geometry}, 
  author={Buchin, Kevin and Nusser, André and Wong, Sampson}, 
  year={2025}, 
  month={Dec.}, 
  pages={765–799},
  url = {https://doi.org/10.20382/jocg.v16i1a21},
}

@phdthesis{buchin2007frechet,
  author = {Buchin, Maike Elisabeth},
  year = {2007},
  title = {On the Computability of the Frechet Distance Between Triangulated Surfaces},
  school = {Freie Universit{\"a}t Berlin},
  url = "http://dx.doi.org/10.17169/refubium-6111",
}

@article{ergur2026cdtw,
  title = {Approximating {CDTW} Distance of Piecewise Algebraic Curves},
  author = {Erg{\"u}r, Alperen A. and Khowala, Shamik},
  year = {2026},
  journal = {arXiv preprint arXiv:2609.04294},
  url = {https://arxiv.org/abs/2609.04294},
}

@article{dba,
  title = {A global averaging method for dynamic time warping, with applications to clustering},
  journal = {Pattern Recognition},
  volume = {44},
  number = {3},
  pages = {678-693},
  year = {2011},
  issn = {0031-3203},
  doi = {https://doi.org/10.1016/j.patcog.2010.09.013},
  url = {https://www.sciencedirect.com/science/article/pii/S003132031000453X},
  author = {François Petitjean and Alain Ketterlin and Pierre Gan{\c c}arski},
}

@InProceedings{softdtwdiv,
  title = 	 { Differentiable Divergences Between Time Series },
  author =       {Blondel, Mathieu and Mensch, Arthur and Vert, Jean-Philippe},
  booktitle = 	 {Proceedings of The 24th International Conference on Artificial Intelligence and Statistics},
  pages = 	 {3853--3861},
  year = 	 {2021},
  editor = 	 {Banerjee, Arindam and Fukumizu, Kenji},
  volume = 	 {130},
  series = 	 {Proceedings of Machine Learning Research},
  month = 	 {13--15 Apr},
  publisher =    {PMLR},
  url = 	 {https://proceedings.mlr.press/v130/blondel21a.html},
}

@InProceedings{rhodes2024timewarpvaesimultaneoustimewarpingrepresentation,
      title={{TimewarpVAE}: Simultaneous Time-Warping and Representation Learning of Trajectories},
      author={Travers Rhodes and Daniel D. Lee},
      year={2024},
      booktitle = {IJCAI'24 Workshop on Artificial Intelligence for Time Series Analysis (AI4TS)},
}

@article{bagnall2017bakeoff,
  title = {The Great Time Series Classification Bake Off: a Review and Experimental Evaluation of Recent Algorithmic Advances},
  author = {Bagnall, Anthony and Lines, Jason and Bostrom, Aaron and Large, James and Keogh, Eamonn},
  year = {2017},
  journal = {Data Mining and Knowledge Discovery},
  volume = {31},
  number = {3},
  pages = {606-660},
  publisher = {Springer},
  url={https://doi.org/10.1007/s10618-016-0483-9},
}

@inproceedings{lods2017ldps,
  title = {Learning {DTW}-Preserving Shapelets},
  author = {Lods, Arnaud and Malinowski, Simon and Tavenard, Romain and Amsaleg, Laurent},
  booktitle = {Advances in Intelligent Data Analysis XVI (IDA 2017)},
  series = {Lecture Notes in Computer Science},
  volume = {10584},
  pages = {198-209},
  year = {2017},
  publisher = {Springer},
  url = {https://doi.org/10.1007/978-3-319-68765-0_17}
}

@inproceedings{rubanova2019latentode,
  author = {Rubanova, Yulia and Chen, Ricky T. Q. and Duvenaud, David K},
  booktitle = {Advances in Neural Information Processing Systems},
  editor = {H. Wallach and H. Larochelle and A. Beygelzimer and F. d\textquotesingle Alch\'{e}-Buc and E. Fox and R. Garnett},
  pages = {},
  publisher = {Curran Associates, Inc.},
  title = {Latent Ordinary Differential Equations for Irregularly-Sampled Time Series},
  url = {https://proceedings.neurips.cc/paper_files/paper/2019/file/42a6845a557bef704ad8ac9cb4461d43-Paper.pdf},
  volume = {32},
  year = {2019}
}

@article{UCRArchive,
  author={Dau, Hoang Anh and Bagnall, Anthony and Kamgar, Kaveh and Yeh, Chin-Chia Michael and Zhu, Yan and Gharghabi, Shaghayegh and Ratanamahatana, Chotirat Ann and Keogh, Eamonn},
  journal={IEEE/CAA Journal of Automatica Sinica}, 
  title={The UCR time series archive}, 
  year={2019},
  volume={6},
  number={6},
  pages={1293-1305},
  doi={10.1109/JAS.2019.1911747},
  url={https://doi.org/10.1109/JAS.2019.1911747}
}

@article{naour2024time,
title={Time Series Continuous Modeling for Imputation and Forecasting with Implicit Neural Representations},
author={Etienne Le Naour and Louis Serrano and L{\'e}on Migus and Yuan Yin and Ghislain Agoua and Nicolas Baskiotis and Patrick Gallinari and Vincent Guigue},
journal={Transactions on Machine Learning Research},
issn={2835-8856},
year={2024},
url={https://openreview.net/forum?id=P1vzXDklar},
note={}
}

@article{tslearn,
  author  = {Romain Tavenard and Johann Faouzi and Gilles Vandewiele and Felix Divo and Guillaume Androz and Chester Holtz and Marie Payne and Roman Yurchak and Marc Ru{\ss}wurm and Kushal Kolar and Eli Woods},
  title   = {Tslearn, A Machine Learning Toolkit for Time Series Data},
  journal = {Journal of Machine Learning Research},
  year    = {2020},
  volume  = {21},
  number  = {118},
  pages   = {1--6},
  url     = {http://jmlr.org/papers/v21/20-091.html}
}
